\documentclass[journal,12pt,onecolumn,compsoc]{IEEEtran}

\usepackage[T1]{fontenc}
\usepackage[utf8]{inputenc}
\usepackage{lmodern}

\usepackage[pdftex]{graphicx}
\usepackage{color}
\usepackage[table,dvipsnames]{xcolor}
\usepackage{hhline}
\usepackage{longtable}

\usepackage[cmex10]{amsmath}
\usepackage{amssymb,amsthm}

\usepackage{algorithmic}
\usepackage[ruled,vlined]{algorithm2e}
\usepackage{adjustbox}
\usepackage{array}
\usepackage[caption=false,font=footnotesize]{subfig}
\usepackage{url}
\usepackage{enumitem}
\usepackage{multicol}
\usepackage{caption}
\usepackage[bb=boondox]{mathalfa}

\hyphenation{op-tical net-works semi-conduc-tor}


\theoremstyle{plain}
\newtheorem{theorem}{Theorem}[section]
\newtheorem{proposition}[theorem]{Proposition}

\newtheorem{lemma}[theorem]{Lemma}

\theoremstyle{remark}
\newtheorem{remark}[theorem]{Remark}

\newtheorem{assumption}[theorem]{Assumption}

\theoremstyle{definition}
\newtheorem{definition}[theorem]{Definition}

\usepackage{pgf}
\usepackage{tikz}
\usetikzlibrary{calc,fadings,decorations.pathreplacing}
\usetikzlibrary{matrix,arrows}
\usepackage{collcell}
\usepackage{xstring}

\def\minus{%
  \setbox0=\hbox{-}%
  \vcenter{%
    \hrule width\wd0 height \the\fontdimen8\textfont3%
  }%
}

\newcommand{\vect}[1]{\boldsymbol{#1}}
\newcommand{\node}[1]{\mathrm{#1}}

\newcommand{\ds}{\displaystyle}
\newcommand{\ts}{\textstyle}

\newcommand{\Nn}{{\mathbb N}}

\newcommand{\Rr}{{\mathbb R}}

\newcommand{\psiGBF}{$\vect{\psi}$-GBF}
\newcommand{\psiGBFRLS}{$\vect{\psi}$-GBF-RLS}

\newcommand{\normconst}{\|(\mathbf{S}_W \mathbf{B}_M\!)^{\minus 1}\|  }
\newcommand{\itpx}{y_{\vect{\psi}}^{\circ}}
\newcommand{\itpy}{y^{\circ}}
\newcommand{\funit}{f_{\mathbb{1}}}

\newcommand{\sgn}{\mathrm{sign}}

\newcommand{\Ll}{\mathbf{L}}
\newcommand{\Aa}{\mathbf{A}}
\newcommand{\Dd}{\mathbf{D}}
\newcommand{\Kk}{\mathbf{K}}

\newcommand{\bin}{\hspace{-2pt}\mbox{\resizebox{0.9em}{0.3em}{ 
\rule{9pt}{5pt}\hspace{-8pt}{\color{white}%
\rule[2pt]{3pt}{1pt}\hspace{-3pt} 
\rule[2pt]{3pt}{1pt}\hspace{-2pt}\rule[1pt]{1pt}{3pt} 
}}}\hspace{-2pt}}

\newcommand{\SIM}{\mathrel{\hss\hbox{\scalebox{0.4}{$\mathrm{SIM}$}}\hss}}

\newcommand\twomoon{%
  \mathrel{\ooalign{\hss\raise0.4ex\hbox{\scalebox{0.6}{$\cap$}}
  \kern-1.4ex\hbox{\scalebox{0.6}{$\cup$}}\hss\cr}}}

\newcommand*{\MinNumber}{0.0}%
\newcommand*{\MidNumber}{0.65} %
\newcommand*{\MaxNumber}{1.0}%

\newcommand{\ApplyGradient}[1]{%
        \ifdim #1 pt > \MidNumber pt
            \pgfmathsetmacro{\PercentColor}{max(min(100.0*(#1 - \MidNumber)/(\MaxNumber-\MidNumber),100.0),0.00)} %
            \hspace{-0.33em}\colorbox{Purple!\PercentColor!white}{#1}
        \else
            \pgfmathsetmacro{\PercentColor}{max(min(100.0*(\MidNumber - #1)/(\MidNumber-\MinNumber),100.0),0.00)} %
            \hspace{-0.33em}\colorbox{white!\PercentColor!white}{#1}
            
        \fi
}

\newcommand{\ApplyG}[2]{%
        \ifdim #1 pt > \MidNumber pt
            \pgfmathsetmacro{\PercentColor}{max(min(100.0*(#1 - \MidNumber)/(\MaxNumber-\MidNumber),100.0),0.00)} %
            \hspace{-0.33em}\colorbox{Purple!\PercentColor!white}{#2}
        \else
            \pgfmathsetmacro{\PercentColor}{max(min(100.0*(\MidNumber - #1)/(\MidNumber-\MinNumber),100.0),0.00)} %
            \hspace{-0.33em}\colorbox{white!\PercentColor!white}{#2}
            
        \fi
}

\newcolumntype{G}{>{\collectcell\ApplyGradient}c<{\endcollectcell}}
\setlength{\fboxsep}{3mm} 
\setlength{\tabcolsep}{0pt}

\newcolumntype{R}[2]{%
    >{\adjustbox{angle=#1,lap=\width-(#2)}\bgroup}%
    l%
    <{\egroup}%
}
\newcommand*\rz{\multicolumn{1}{R{0}{-2em}}}

\makeatletter

\newcommand\coltab[1]{\@coltab#1\q@stop}

\long\def\@coltab#1(#2)\q@stop{%
    \ApplyG{#1}{#1 ($\pm$ #2)}
}
\makeatother

\newcolumntype{T}{>{\collectcell\coltab}l<{\endcollectcell}}

\begin{document}

\title{Semi-Supervised Learning on Graphs with Feature-Augmented Graph Basis Functions
}

\author{Wolfgang Erb
\thanks{Universit{\`a} degli Studi di Padova, Dipartimento di Matematica ''Tullio Levi-Civita'', wolfgang.erb@lissajous.it.}
}

\markboth{Semi-Supervised Learning on graphs with Feature-Augmented GBF's}%
{Semi-Supervised Learning on graphs with Feature-Augmented GBF's}

\maketitle

\begin{abstract}
\noindent For semi-supervised learning on graphs, we study how 
initial kernels in a supervised learning regime can be augmented with additional information from known priors or from unsupervised learning outputs. These augmented kernels are constructed in a simple update scheme based on the Schur-Hadamard product of the kernel with additional feature kernels. As generators of the positive definite kernels we will focus on graph basis functions (GBF) that allow to include geometric 
information of the graph via the graph Fourier transform. Using a regularized least squares (RLS) approach for machine learning, we will test the derived augmented kernels for the classification of data on graphs. 
\end{abstract}

\begin{IEEEkeywords}
Semi-supervised learning on graphs, classification on graphs, regularized least squares (RLS) solutions, kernel-based learning, feature-augmented kernels, positive definite graph basis functions (GBF's)
\end{IEEEkeywords}

\IEEEpeerreviewmaketitle

\section{Introduction}

\IEEEPARstart{S}{emi-supervised} learning (SSL) methods are devised to incorporate additional information of unlabeled data into a learning task.
Compared to a purely supervised regime, SSL needs less labeled data and potentially increases the accuracy of learning \cite{Zhu05}. On the other hand, it requires reliable prior knowledge of the underlying structure of the data. The challenging part in the design of SSL algorithms is therefore to find smart, robust and efficient ways to combine prior sources with the information of the labels. 

For the classification of data on graphs, semi-supervised methods are particularly promising. In this setting, not only attributes on the nodes, but also neighborhood relations between the vertices, and the global geometric structure of the graph can be incorporated in the learning scheme. Further, graph clustering algorithms provide powerful unsupervised classification schemes that are able to extract essential substructures of the graph and, in this way, to generate additional priors for semi-supervised learning schemes. 

In this work, we are interested in kernel-based SSL models in which the nodes of a graph  are classified by the solution of a regularized least squares (RLS) problem. The regularization is determined by a reproducing kernel Hilbert space norm forcing the solution to be smoothly representable in terms of the given kernel. The kernel itself can be chosen flexibly according to the given data. Therefore, a smart strategy to integrate the geometric graph structure and additional feature information into the model kernel increases the classification performance of the corresponding SSL scheme. 

Goal of this work is to present a simple update strategy in which kernels on graphs are augmented with feature information. This easy to implement update procedure allows us to include geometric similarities on the graph, relations between node attributes as well as additional priors derived from unsupervised schemes. The resulting augmented kernels need less labels than the initial kernels and enhance the classification accuracy.  

As generators of the kernels, we will focus on positive definite graph basis functions (GBF's). GBF's are graph analogs of radial basis functions in $\Rr^d$ \cite{erb2019b} that generate kernels in terms of generalized shifts of a principal basis function. More important, based on the graph Fourier transform, GBF's give a compact description of the involved reproducing kernel Hilbert spaces. Further, as indicated in Fig. \ref{fig:illustrationaugmentedGBF}, the usage of GBF's allows to describe the kernel augmentation step in a simple way.

\begin{figure}[htb]
\begin{equation*}
	 \underbrace{\vcenter{\hbox{\includegraphics[width = 4cm]{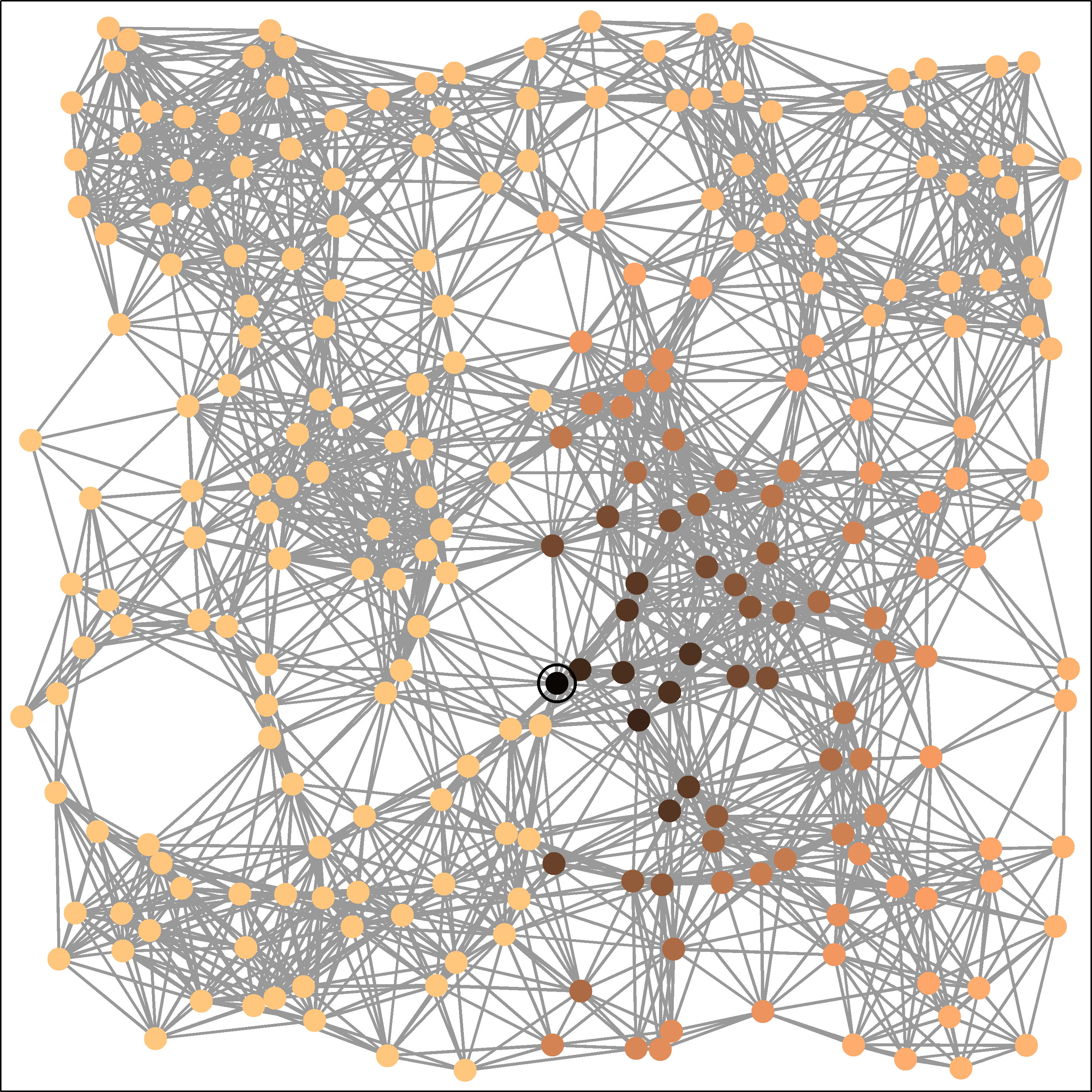}}}}_{\begin{minipage}{3.5cm} \centering \small Feature-augmented \psiGBF{} $K_{\vect{\psi}}(\cdot,\node{w})$ \end{minipage}}
	 \; = \;	
	 \underbrace{\vcenter{\hbox{\includegraphics[width = 4cm]{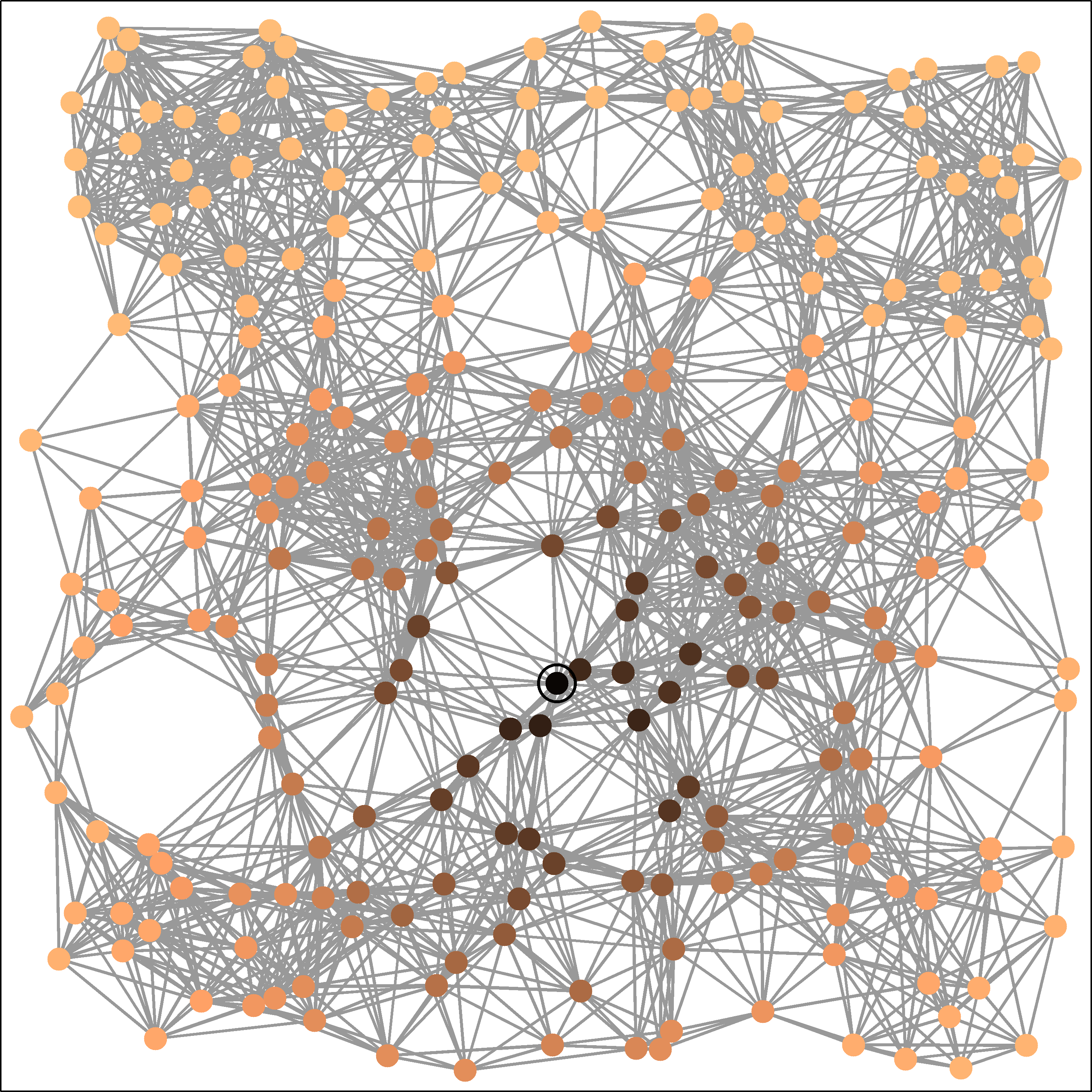}}}}
_{\begin{minipage}{3.5cm} \centering \small Diffusion GBF $ \mathbf{C}_{\delta_{\node{w}}}f $ \end{minipage}}
	 \; \odot \; 
		 \underbrace{\vcenter{\hbox{\includegraphics[width = 4cm]{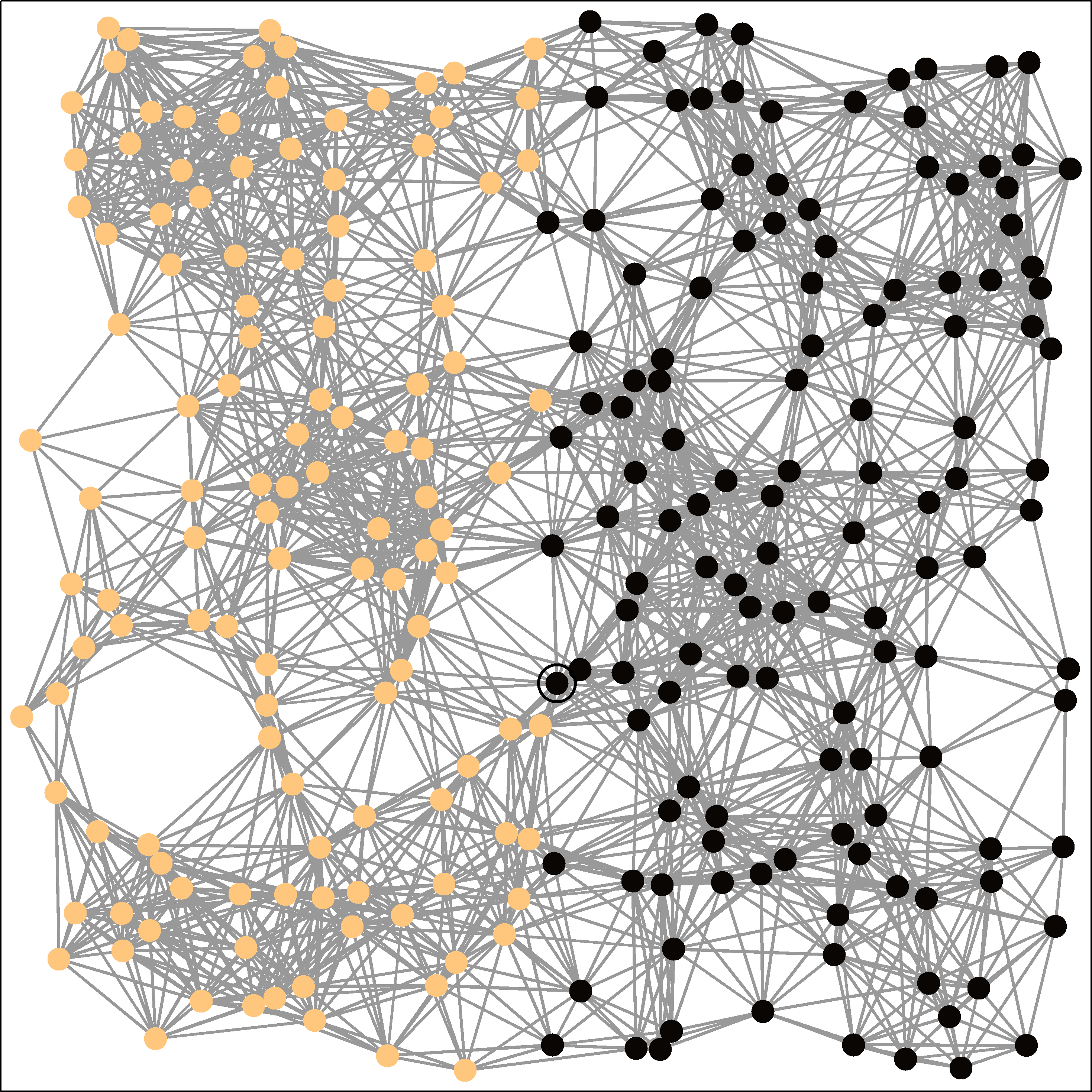}}}}
_{\begin{minipage}{3.5cm} \centering \small Feature information $ \mathbf{C}_{\delta_{\psi(\node{w})}}f^F \circ \psi $ \end{minipage}}
\end{equation*}
	\caption{Illustration on how a feature-adapted basis function $K_{\vect{\psi}}(\cdot,\node{w})$ on a sensor graph can be constructed as the Schur-Hadamard product of a graph basis function $\mathbf{C}_{\delta_{\node{w}}}f$ with additional feature information. The ringed node indicates the center node $\node{w}$.}
  	 \label{fig:illustrationaugmentedGBF}
\end{figure}

\noindent \textbf{Construction principle for the augmented kernels.} 
The initial building blocks of our SSL model are kernels incorporating similarities between graph nodes. Such kernels can be constructed easily in terms of the graph Laplacian or, more generally, by positive definite GBF's as generators. 
The information of the additional features is provided by a set of feature maps and feature graphs. As for the graph itself, the similarities on the feature graphs are captured by respective feature kernels. 

As soon as this information is available, we construct a tensor-product kernel on the Cartesian product of the graph with the feature graphs. The given feature maps provide an embedding of the graph into the product graph. In a central step of the construction, this embedding allows us to extract a feature-augmented kernel from the tensor-product kernel. An intriguing detail of this construction is the fact that the augmented kernel can be formulated as the Schur-Hadamard product of the original kernel with an additional feature matrix, as illustrated in Fig. \ref{fig:illustrationaugmentedGBF}. This allows us to formulate the entire kernel augmentation procedure with simple update steps based on the Schur-Hadamard product. 
We present the details of this construction in Section \ref{sec:construction}.

\vspace{2mm}   

\noindent \textbf{Outline of the paper.} 
The required terminologies for spectral graph theory and kernel methods are derived in Section \ref{sec:spectraltheory} and Section \ref{sec:kernelmethods}, respectively. The necessary background for the Cartesian products of graphs and the involved tensor-product construction is given in Section \ref{sec:cartesianproduct}. The main construction of the augmented kernels is presented in Section \ref{sec:psiGBFRLS}. In particular, it is shown how, in a kernel-based setting for SSL, initial kernels on a graph can be augmented with additional information provided by feature maps. In Section
\ref{sec:featuremaps}, we give some examples on how feature maps and feature graphs can be generated and incorporated in the SSL scheme. Further, for binary feature maps we provide a more profound analysis of the augmented kernels and the corresponding reproducing kernel Hilbert spaces (Section \ref{sec:augmentationbinary}). Finally in Section \ref{sec:numericalexamples}, we give several experiments that show how augmented graph basis functions can be applied to particular classification tasks on graphs. 

\vspace{2mm}   

\noindent \textbf{Literature.} 
The amount of literature on SSL is huge and, regrettably, we can not mention all of it. In the following, we will focus on the most important influences for this work. The mathematical foundations of machine learning involving kernel techniques can be found in the monographs \cite{Schoelkopf2002,Vapnik1998}. A general overview for different SSL methods on graphs is given in \cite{Zhu05}.

\begin{itemize}
\item[-] Kernel techniques on graphs linked to the spectral decomposition of the graph Laplacian were studied in \cite{KondorLafferty2002,SmolaKondor2003}. In this work, we will use a related kernel concept based on the generalized shifts of a graph basis function (GBF) \cite{erb2019b}.
\item[-] The presented feature-augmented kernels can be considered as graph-based constructions of variably scaled kernels. These kernels were developed for adaptive interpolation \cite{Bozzini1,demarchi2019a,demarchi2019b} and learning \cite{campi2019} in the euclidean setting. An important conceptual difference in this article is the involved tensor-product construction that allows to obtain the augmented kernels by a simple update. 
\item[-] For supervised classification, we will follow a regularized least squares (RLS) approach as for instance framed in \cite{Rifkin2003}. Particularly for graphs, similar kernel-based classification methods have been considered, for instance, in \cite{BelkinMatveevaNiyogi2004} and \cite{Romero2017}.
\item[-] The SSL scheme considered in this work is inherently transductive in the sense that the obtained classification can not be expanded beyond the given graph structure. This is in contrast to inductive SSL schemes as for instance studied in \cite{Belkin2006,BelkinNiyogi2004} where graphs are embedded in manifold structures. Other well-known transductive SSL schemes on graphs are for instance transductive SVM's \cite{Joachims1999,Vapnik1998} or Naive Bayes approaches \cite{Nigam2000}.

\item[-] A large line of research in kernel-based machine learning is related to the extraction of data-driven optimal kernels from a given family of kernels \cite{goenen2011}. Such multiple kernel learning or extraction strategies can naturally be included in different types of SSL schemes \cite{Aiolli2015,Mhaskar2019,Wang2012,Yan2010}. The approach of the actual work is simpler. Here, the augmented kernel is constructed solely by multiplicative updates based on additional feature information. 

\end{itemize}

\vspace{2mm}

\section{Background} \label{sec:spectraltheory}
\subsection{Spectral Graph Theory} \label{sec:spectralgraphtheory}
We give a short introduction to graph theory and the notion of spectrum and convolution on a graph. A standard reference for spectral graph theory is \cite{Chung}, an introduction to the graph Fourier transform and space-frequency concepts is given in \cite{shuman2016}. 

We will regard a graph $G$ as a triplet $G=(V,E,\mathbf{L})$ consisting of a finite set $V=\{\node{v}_1, \ldots, \node{v}_{n}\}$ of vertices, a set $E \subseteq V \times V$ of edges connecting the vertices and a graph Laplacian $\Ll \in \Rr^{n \times n}$. We understand $\mathbf{L}$ as a generalized graph Laplacian (see \cite[Section 13.9]{GodsilRoyle2001} in the sense that $\Ll$ is a symmetric matrix and the entries
$\Ll_{i,j}$ satisfy 
\begin{equation} \label{eq:generalizedLaplacian}
\ds {\begin{array}{ll}\; \Ll_{i,j}<0& \text{if $i \neq j$ and $\node{v}_{i}, \node{v}_{j}$ are connected}, \\ \; \Ll_{i,j}=0 & \text{if $i\neq j $ and $\node{v}_{i}, \node{v}_{j} $ are not connected}, \\ \; \Ll_{i,i} \in \Rr & \text{for $i \in \{1, \ldots, n\}$}.\end{array}}\end{equation}
In general, the negative non-diagonal elements of the Laplacian $\Ll$ describe connection weights of the edges, while the diagonal elements can be used to differentiate the importance of the single vertices. Important examples of $\Ll$ are: 
\begin{itemize} 
\item[(1)] $\Ll_A = - \mathbf{A}$, where $\mathbf{A}$ denotes the adjacency matrix of the graph given by
\begin{equation*}
    \mathbf{A}_{i,j} := 
  \begin{cases}
    1, & \text{if $i \neq j$ and $\node{v}_i, \node{v}_j$ are connected}, \\
    0, & \text{otherwise},
  \end{cases}.
\end{equation*}
\item[(2)] $\Ll_S = \mathbf{D} - \mathbf{A}$, where $\mathbf{D}$ is the degree matrix with the entries given by
\begin{equation*}
    \mathbf{D}_{i,j} := 
  \begin{cases}
    \sum_{k=0}^n \mathbf{A}_{i,k}, & \text{if } i=j, \\
    0, & \text{otherwise}.
  \end{cases}
\end{equation*}
In algebraic graph theory, $\Ll_S$ is the most common definition for a graph Laplacian. The matrix $\Ll_S$ is positive definite.  
\item[(3)] $\Ll_N = \mathbf{D}^{-\frac{1}{2}} \Ll_S \mathbf{D}^{-\frac{1}{2}} = \mathbf{I}_n - \mathbf{D}^{-\frac{1}{2}} \mathbf{A} \mathbf{D}^{-\frac{1}{2}}$ is called the normalized graph Laplacian of $G$. Here, $\mathbf{I}_n$ denotes the identity matrix in $\Rr^n$. One particular feature of the normalized
graph Laplacian $\Ll_N$, is the fact that its spectrum is contained in the interval $[0,2]$, see \cite[Lemma 1.7]{Chung}.
\end{itemize}

\subsection{Fourier Transform of Graph Signals} 
We denote the vector space of all real-valued
signals $x: V \rightarrow \mathbb{R}$ on $G$ as $\mathcal{L}(G)$. Since $G$ consists of $n$ nodes, the dimension of $\mathcal{L}(G)$ is exactly $n$. As the node set $V$ is ordered, we can describe every signal $x$ also as a vector 
$x = (x(\node{v}_1), \ldots, x(\node{v}_n))^{\intercal}\in \mathbb{R}^n$. Depending on the context, we will switch between the representation of $x$ as a function in $\mathcal{L}(G)$ and a vector in $\Rr^n$. On the space $\mathcal{L}(G)$, we have a natural inner product given by $$y^\intercal x := \sum_{i=1}^n x(\node{v}_i) y(\node{v}_i).$$ The corresponding euclidean norm is given by $\|x\|^2 := x^{\intercal} x = \sum_{i=1}^n x(\node{v}_i)^2$. The canonical orthonormal basis in $\mathcal{L}(G)$ is given by the system $\{\delta_{\node{v}_1}, \ldots, \delta_{\node{v}_n}\}$ where the unit vectors $\delta_{\node{v}_j}$ satisfy
$\delta_{\node{v}_j}(\node{v}_i) = \delta_{i,j}$ for $i,j \in \{1, \ldots,n\}$.

The harmonic structure on $G$ is determined by the graph Laplacian $\Ll$. As $\Ll$ is symmetric, this harmonic structure does not depend on the orientation of the edges in $G$. The graph Laplacian $\Ll$ allows to introduce a graph Fourier transform on $G$ in terms of the orthonormal eigendecomposition
\begin{equation*}
\mathbf{L}=\mathbf{U}\mathbf{M}_{\lambda} \mathbf{U^\intercal}.
\end{equation*}
Here,
$\mathbf{M}_{\lambda} = \mathrm{diag}(\lambda) = \text{diag}(\lambda_1,\ldots,\lambda_{n})$ 
denotes the diagonal matrix with the increasingly ordered eigenvalues $\lambda_i$, $i \in \{1, \ldots, n\}$, of $\mathbf{L}$ as diagonal entries.
The columns $ u_1, \ldots, u_{n}$ of the orthonormal matrix $\mathbf{U}$ are normalized eigenvectors of
$\mathbf{L}$ with respect to the eigenvalues $\lambda_1, \ldots, \lambda_n$. The ordered set $\hat{G} = \{u_1, \ldots, u_{n}\}$ of eigenvectors is an orthonormal basis for the space $\mathcal{L}(G)$ of signals on the graph $G$. We call $\hat{G}$ the spectrum of the graph $G$. 

In classical Fourier analysis, as for instance the Euclidean space or the torus, the Fourier transform can be defined in terms of the eigenvalues and eigenfunctions of the Laplace operator.
In analogy, we consider the elements of $\hat{G}$, i.e. the eigenvectors $\{u_1, \ldots, u_{n}\}$, as the Fourier basis on $G$. In particular, going back to our 
spatial signal $x$, we can define the graph Fourier transform of $x$ as the vector
\begin{equation*}
\hat{x} := \mathbf{U^\intercal}x  = (u_1^\intercal x, \ldots, u_n^\intercal x)^{\intercal},
\end{equation*}
with the inverse graph Fourier transform given as
\begin{equation*}
x = \mathbf{U}\hat{x}. 
\end{equation*}
The entries $\hat{x}_i = u_i^\intercal x$ of $\hat{x}$ are the frequency components or coefficients
of the signal $x$ with respect to the basis functions $u_i$. For this reason, $\hat{x} : \hat{G} \to \Rr$ can be regarded as a function on the spectral domain $\hat{G}$ of the graph $G$. 
To keep the notation simple, we will however usually represent spectral distributions $\hat{x}$ as 
vectors $(\hat{x}_1, \ldots, \hat{x}_n)^{\intercal}$ in $\Rr^n$.   

\subsection{Convolution and the graph $C^{\ast}$-algebra}

Based on the graph Fourier transform we can introduce a convolution operation between two graph signals $x$ and $y$. For this, we use an analogy to the convolution theorem in classical Fourier analysis linking convolution in the spatial domain to 
pointwise multiplication in the Fourier domain. In this way, the graph convolution for two signals $x,y \in \mathcal{L}(G)$ can be defined as
\begin{equation}
    x \ast y := \mathbf{U} \left ( \mathbf{M}_{\hat{x}} \hat{y} \right ) = \mathbf{U}\mathbf{M}_{\hat{x}}\mathbf{U^\intercal}y \label{eq:spectralfilter1}.
\end{equation}
As before, $\mathbf{M}_{\hat{x}}$ denotes the diagonal matrix 
$\mathbf{M}_{\hat{x}} = \mathrm{diag}(\hat{x})$ and $\mathbf{M}_{\hat{x}} \hat{y} = (\hat{x}_1 \hat{y}_1, \ldots, \hat{x}_{n} \hat{y}_{n})$
gives the pointwise product of the two vectors $\hat{x}$ and $\hat{y}$. 
The convolution $\ast$ on $\mathcal{L}(G)$ has the following properties:
\begin{itemize}
\item[(i)] $x \ast y = y \ast x$ (Commutativity),
\item[(ii)] $(x \ast y) \ast z = x \ast (y \ast z)$ (Associativity),
\item[(iii)] $(x + y) \ast z = x \ast z + (y \ast z)$ (Distributivity),
\item[(iv)] $(\alpha x) \ast y = \alpha(y \ast x)$ for all $\alpha \in \Rr$ (Associativity for scalar multiplication).
\end{itemize}
The unity element of the convolution is given by $\funit = \sum_{i=1}^n u_i$. 
In view of the linear structure in equation \eqref{eq:spectralfilter1}, we can further define a convolution operator $\mathbf{C}_x$ on $\mathcal{L}(G)$ as
\[\mathbf{C}_x = \mathbf{U}\mathbf{M}_{\hat{x}}\mathbf{U^\intercal}. \]
Written in this way, $x \ast y$ corresponds to the matrix-vector product $\mathbf{C}_x y = x \ast y$, and we can regard every $x \in \mathcal{L}(G)$ also as a filter function acting by convolution on a second signal $y$. 

The rules (i)-(iv) of the graph convolution ensure that the vector space $\mathcal{L}(G)$ endowed with the convolution $\ast$ as a multiplicative operation is a commutative and associative algebra. With the identity as a trivial involution and the norm 
$$\|x\|_{\mathcal{A}} = \sup_{\|y\| = 1} \|x \ast y\|$$
we obtain a real $C^{\ast}$-algebra $\mathcal{A}$. This graph $C^{\ast}$-algebra $\mathcal{A}$ can be considered as a standard model for graph signal processing. It contains all possible signals and filter functions on $G$ and describes how filters act on signals via convolution. Furthermore, $\mathcal{A}$ contains the entire information of the graph Fourier transform, see \cite{erb2019b}. 

\section{Regularized least squares (RLS) classification with positive definite kernels} \label{sec:kernelmethods}

We give a short synthesis of well-known facts about regularized least squares (RLS) methods for classification. A historical overview for RLS in machine learning and a comparison to support vector machines is given in \cite{Rifkin2003}. A more general introduction to kernel-based methods for machine learning can be found in \cite{Schoelkopf2002,Vapnik1998}.  

\subsection{Positive definite kernels}
We consider symmetric and positive definite kernels $K : V \times V \to \Rr$ on the vertex set $V$. Linked to the kernel $K$ is a linear operator $\mathbf{K}: \mathcal{L}(G) \to \mathcal{L}(G)$ acting on a graph signal $x \in\mathcal{L}(G)$ by
\[\mathbf{K} x(\node{v}_i) = \sum_{j=1}^n K(\node{v}_i,\node{v}_j) x(\node{v}_j).\] 
By identifying signals $x \in \mathcal{L}(G)$ with vectors in 
$\Rr^n$, we can represent $\mathbf{K}$ as the $n \times n$-matrix
\[ \mathbf{K} = \begin{pmatrix} K(\node{v}_1,\node{v}_1) & K(\node{v}_1,\node{v}_2) & \ldots & K(\node{v}_1,\node{v}_n) \\
K(\node{v}_2,\node{v}_1) & K(\node{v}_2,\node{v}_2) & \ldots & K(\node{v}_2,\node{v}_n) \\
\vdots & \vdots & \ddots & \vdots \\
K(\node{v}_n,\node{v}_1) & K(\node{v}_n,\node{v}_2) & \ldots & K(\node{v}_n,\node{v}_n)
\end{pmatrix}.\]
In this way, the notion of positive definiteness can be transferred from $\Kk$ to the kernel $K$. 
\begin{definition} \label{def:pd}
A kernel $K$ is called positive definite (p.d.) 
if the matrix $\mathbf{K} \in \Rr^{n \times n}$ is symmetric and positive definite, i.e., we have $\mathbf{K}^\intercal = \mathbf{K}$ and $x^{\intercal} \mathbf{K} x > 0$ for all $x \in \Rr^n$, $x \neq 0$. Correspondingly, $K$ is called positive semi-definite (p.s.d.) if 
$\mathbf{K} \in \Rr^{n \times n}$ is symmetric and $x^{\intercal} \mathbf{K} x \geq 0$ for all $x \in \Rr^n$.
\end{definition}
\subsection{RLS solutions in reproducing kernel Hilbert spaces} \label{subsec:kernelRLS}
With a p.d. kernel $K$ we can define an inner product $\langle x,y \rangle_{K}$ and a norm $\| x \|_{K}$ as
\[ \langle x,y \rangle_{K} = y^{\intercal} \mathbf{K}^{-1} x, \quad 
\|x\|_K = \sqrt{\langle x,x \rangle_{K}}, \qquad x,y \in \mathcal{L}(G).\]
The space $\mathcal{L}(G)$ of signals endowed with this inner product is a reproducing kernel Hilbert space $\mathcal{N}_{K}$ (a systematic study is given in \cite{Aronszajn1950}) in which every signal $x \in \mathcal{L}(G)$ can be recovered from  $K$ as 
\[ \langle x, K(\cdot,\node{v}) \rangle_{K} = x^{\intercal} \mathbf{K}^{-1} K(\cdot,\node{v}) = x(\node{v}).\]
A RLS problem can be formulated in terms of the native space $\mathcal{N}_{K}$ and the kernel $K$. We call $y^*$ a RLS solution if it minimizes the regularized least-squares functional
\begin{equation} \label{eq:RLSfunctional}
y^* = \underset{x \in \mathcal{N}_{K}}{\mathrm{argmin}} \left( \frac{1}{N} \sum_{i=1}^N |x(\node{w}_i)-y(\node{w}_i)|^2 + \gamma \|x\|_{K}^2 \right), \quad \gamma > 0.
\end{equation}
The values $y(\node{w}_i) \in \Rr$, $i \in \{1, \ldots, N\}$ are given data values on a fixed subset $W = \{\node{w}_1, \ldots, \node{w}_N\} \subset V$ that we want to approximate with the RLS solution $y^*$.
The representer theorem \cite[Theorem 4.2]{Schoelkopf2002} states that the minimizer $y^*$ of the RLS functional
can be uniquely expressed as a linear combination
\begin{equation} \label{eq:representertheorem}
y^*(\node{v}) = \sum_{i = 1}^N c_i K(\node{v},\node{w}_i).
\end{equation}
It is well-known (see \cite{Rifkin2003}, \cite[Theorem 1.3.1.]{Wahba1990}) that the coefficients $c_i$ in the representation \eqref{eq:representertheorem} can be calculated as the solution of the linear system 
\begin{equation} \label{eq:computationcoefficients} 
 \left(\begin{array}{c} \phantom{c_1} \\ \phantom{c_2} \\ \phantom{\vdots} \\ \phantom{c_N} \end{array}\right. \hspace{-0.9cm} \underbrace{ \begin{pmatrix} K(\node{w}_1,\node{w}_1) & K(\node{w}_1,\node{w}_2) & \ldots & K(\node{w}_1,\node{w}_N) \\
K(\node{w}_2,\node{w}_1) & K(\node{w}_2,\node{w}_2) & \ldots & K(\node{w}_2,\node{w}_N) \\
\vdots & \vdots & \ddots & \vdots \\
K(\node{w}_N,\node{w}_1) & K(\node{w}_N,\node{w}_2) & \ldots & K(\node{w}_N,\node{w}_N)
\end{pmatrix}}_{\mathbf{K}_W}  + \gamma N \mathbf{I}_N \hspace{-1cm} \left. \begin{array}{c} \phantom{c_1} \\ \phantom{c_2} \\ \phantom{\vdots} \\ \phantom{c_N} \end{array}\right) \begin{pmatrix} c_1 \\ c_2 \\ \vdots \\ c_N \end{pmatrix}
= \begin{pmatrix} y(\node{w}_1) \\ y(\node{w}_2) \\ \vdots \\ y(\node{w}_N) \end{pmatrix}.
\end{equation}
With $\mathbf{K}$ being p.d. also the submatrix $\mathbf{K}_W$ is p.d. by the inclusion principle \cite[Theorem 4.3.15]{HornJohnson1985}. The linear system \eqref{eq:computationcoefficients} is therefore uniquely solvable. By \eqref{eq:representertheorem}, the RLS solution $y^*$ 
can be uniquely written in terms of $\{K(\cdot,\node{w}_1), \ldots, K(\cdot,\node{w}_N)\}$. 
We denote the corresponding approximation space as
\[\mathcal{N}_{K,W} = \left\{x \in \mathcal{L}(G) \ | \ x(\node{v}) = \sum_{k=1}^N c_k K(\node{v},\node{w}_k)\right\}. \]
For a vanishing regularization parameter $\gamma \to 0$, the limit $\itpy = \lim_{\gamma \to 0}y^*$ is uniquely determined by the condition \eqref{eq:representertheorem} and the coefficients calculated in \eqref{eq:computationcoefficients} with $\gamma = 0$. 
The resulting signal $\itpy$ interpolates the data $(\node{w}_i,y(\node{w}_i))$, i.e. we have $\itpy(\node{w}_i) = y(\node{w}_i)$ for all $i \in \{1, \ldots, N\}$. 

\subsection{Positive definite functions on graphs and GBF-RLS solutions} \label{sec:pdfunctions}
Using positive definite functions on $G$ we can precisely encode those p.d. kernels that have a Mercer decomposition in terms of the Fourier basis on $G$ \cite{erb2019b}. 

\begin{samepage}
\begin{definition} \label{def:pdfunction}
A function $f: V \to \Rr$ on $G$ is called a positive definite (positive semi-definite) graph basis function (GBF) if the matrix 
\[ \mathbf{K}_{f} = \begin{pmatrix} \mathbf{C}_{\delta_{\node{v}_1}} f(\node{v}_1) & \mathbf{C}_{\delta_{\node{v}_2}} f(\node{v}_1) & \ldots & \mathbf{C}_{\delta_{\node{v}_n}} f(\node{v}_1) \\
\mathbf{C}_{\delta_{\node{v}_1}} f(\node{v}_2) & \mathbf{C}_{\delta_{\node{v}_2}} f(\node{v}_2) & \ldots & \mathbf{C}_{\delta_{\node{v}_n}} f(\node{v}_2) \\
\vdots & \vdots & \ddots & \vdots \\
\mathbf{C}_{\delta_{\node{v}_1}} f(\node{v}_n) & \mathbf{C}_{\delta_{\node{v}_2}} f(\node{v}_n) & \ldots & \mathbf{C}_{\delta_{\node{v}_n}} f(\node{v}_n)
\end{pmatrix}\]
is symmetric and positive definite (positive semi-definite, respectively). 
\end{definition}
\end{samepage}

The kernel $K_f$ linked to the matrix $\mathbf{K}_{f}$ and the GBF $f$ is given as
\[ K_f(\node{v}_i,\node{v}_j) := \mathbf{C}_{\delta_{\node{v}_j}} f(\node{v}_i).\]
In this way, we can reformulate RLS problems in terms of GBF's. In particular, the unique minimizer $y^*$ of the RLS functional \eqref{eq:RLSfunctional} based on the kernel $K_f$ has the representation
$y^*(\node{v}) = \sum_{i = 1}^N c_i \mathbf{C}_{\delta_{\node{w}_i}} f(\node{v})$
with the coefficients $c_i$ given by \eqref{eq:computationcoefficients}. We call the corresponding minimizer $y^*$ the GBF-RLS solution. \\  

The signals $\mathbf{C}_{\delta_{\node{w}_i}} f$ can be interpreted as generalized translates of the basis function $f$ on the graph $G$. In fact, if $G$ has a group structure and the spectrum $\hat{G}$ consists of properly scaled characters of $G$, then the basis functions $\mathbf{C}_{\delta_{\node{w}_i}} f$ are shifts of the signal $f$ by the group element $\node{w}_i$. An advantage of GBF's is the following simple characterization of the kernel $K_f$ and the respective native space in terms of the graph Fourier transform. The derivations can be found in \cite{erb2019b}.

\begin{theorem} \label{thm:Bochner}
A GBF $f$ is positive definite if and only if $\hat{f}_k > 0$ for all $k \in \{1, \ldots, n\}$. The Mercer decomposition of the corresponding p.d. kernel $K_f$ is given by
\[ K_f(\node{v},\node{w}) = \mathbf{C}_{\delta_{\node{w}}}f(\node{v}) = \sum_{k=1}^n \hat{f}_k \, u_k(\node{v}) \, u_k(\node{w}).\]
The inner product and the norm of the native space
$\mathcal{N}_{K_f}$ can further be written as
\[ \langle x , y \rangle_{K_f} = 
\sum_{k=1}^n \frac{\hat{x}_k \, \hat{y}_k}{\hat{f}_k} = \hat{y}^\intercal \mathbf{M}_{1/\hat{f}} \, \hat{x} \quad \text{and} \quad \| x \|_{K_f} = \sqrt{\sum_{k=1}^n \frac{\hat{x}_k^2}{\hat{f}_k}}. \]
The RLS functional \eqref{eq:RLSfunctional} can be reformulated as
\begin{equation} \label{eq:RLSfunctionalpd}
y^* = \underset{x \in \mathcal{N}_{K_f}}{\mathrm{argmin}} \left( \frac{1}{N} \sum_{i=1}^N |x(\node{w}_i)-y(\node{w}_i)|^2 + \gamma x^\intercal \mathbf{U} \mathbf{M}_{1/\hat{f}} \mathbf{U}^\intercal x \right).
\end{equation}
\end{theorem}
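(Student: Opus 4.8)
The plan is to reduce every assertion to the single spectral identity $\mathbf{K}_f = \mathbf{U}\mathbf{M}_{\hat{f}}\mathbf{U}^\intercal$. First I would compute the generalized translate $\mathbf{C}_{\delta_{\node{w}}}f$ explicitly. Using the definition $\mathbf{C}_{\delta_{\node{w}}} = \mathbf{U}\mathbf{M}_{\widehat{\delta_{\node{w}}}}\mathbf{U}^\intercal$ from \eqref{eq:spectralfilter1} together with $\widehat{\delta_{\node{w}}} = \mathbf{U}^\intercal \delta_{\node{w}}$, whose $k$-th entry is $u_k^\intercal \delta_{\node{w}} = u_k(\node{w})$, a direct expansion yields $\mathbf{C}_{\delta_{\node{w}}}f(\node{v}) = \sum_{k=1}^n \hat{f}_k\, u_k(\node{v})\, u_k(\node{w})$. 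This is precisely the claimed Mercer decomposition, and reading it off as a matrix (with $\mathbf{U}_{i,k}=u_k(\node{v}_i)$) gives $\mathbf{K}_f = \mathbf{U}\mathbf{M}_{\hat{f}}\mathbf{U}^\intercal$, establishing the second assertion.

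Since $\mathbf{U}$ is orthonormal and $\mathbf{M}_{\hat{f}}$ is diagonal and real, the identity $\mathbf{K}_f = \mathbf{U}\mathbf{M}_{\hat{f}}\mathbf{U}^\intercal$ is already an orthonormal eigendecomposition of $\mathbf{K}_f$: it shows $\mathbf{K}_f^\intercal = \mathbf{K}_f$ and that the eigenvalues of $\mathbf{K}_f$ are exactly the Fourier coefficients $\hat{f}_k$ with eigenvectors $u_k$. By the spectral theorem for symmetric matrices, $\mathbf{K}_f$ is positive definite if and only if all its eigenvalues are strictly positive, i.e.\ if and only if $\hat{f}_k > 0$ for every $k$; this is the Bochner-type characterization of the first assertion, and the p.s.d.\ case follows identically with $\hat{f}_k \geq 0$.

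For the native space geometry I would invert the decomposition under the positive-definiteness hypothesis, $\mathbf{K}_f^{-1} = \mathbf{U}\mathbf{M}_{1/\hat{f}}\mathbf{U}^\intercal$, which is well-defined precisely because every $\hat{f}_k$ is nonzero. Substituting this into the native space inner product $\langle x,y\rangle_{K_f} = y^\intercal \mathbf{K}_f^{-1} x$ and using $\hat{x} = \mathbf{U}^\intercal x$, $\hat{y} = \mathbf{U}^\intercal y$ collapses the orthonormal factors and gives $\langle x,y\rangle_{K_f} = \hat{y}^\intercal \mathbf{M}_{1/\hat{f}}\,\hat{x} = \sum_{k=1}^n \hat{x}_k \hat{y}_k / \hat{f}_k$; taking $y=x$ yields the stated norm. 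Finally, rewriting $\gamma \|x\|_{K_f}^2 = \gamma\, x^\intercal \mathbf{U}\mathbf{M}_{1/\hat{f}}\mathbf{U}^\intercal x$ and inserting it into the regularization term of \eqref{eq:RLSfunctional} produces the reformulated functional \eqref{eq:RLSfunctionalpd}.

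The entire argument is a chain of elementary linear-algebra manipulations, so I do not expect a deep obstacle. The one genuine step is the correct evaluation of $\widehat{\delta_{\node{w}}}$ and the recognition that $\mathbf{U}\mathbf{M}_{\hat{f}}\mathbf{U}^\intercal$ is \emph{simultaneously} the Mercer expansion and the eigendecomposition of $\mathbf{K}_f$; once this is in place, positive definiteness, the native space inner product, and the RLS reformulation all drop out by reading off eigenvalues and inverting the diagonal factor. The only point requiring mild care is to track that $\mathbf{M}_{1/\hat{f}}$ exists exactly under the p.d.\ hypothesis, so that the native space norm and functional \eqref{eq:RLSfunctionalpd} are well-defined on all of $\mathcal{L}(G)$.
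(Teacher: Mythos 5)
Your proof is correct and follows the natural spectral argument: the paper itself defers the proof of Theorem \ref{thm:Bochner} to \cite{erb2019b}, and the chain you give (evaluating $\widehat{\delta_{\node{w}}}$ to get the Mercer expansion, reading $\mathbf{K}_f=\mathbf{U}\mathbf{M}_{\hat{f}}\mathbf{U}^\intercal$ as an eigendecomposition, and inverting the diagonal factor for the native space norm) is exactly the intended derivation. No gaps; the only point worth stating explicitly is that $\mathbf{K}_f$ is symmetric by construction of $\mathbf{C}_{\delta_{\node{w}}}$, which your decomposition already makes evident.
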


\subsection{Examples of positive definite GBF's on graphs} \label{sec:examples}
\begin{enumerate}
\item[(1)] (Polynomials of the graph Laplacian $\Ll$)
Simple p.d. GBF's can be obtained from the eigenvalue decomposition of $\Ll$. We assume that $p_r$ is a polynomial of degree $r$ satisfying $p_r(\lambda_k)>0$ for all eigenvalues $\lambda_k$ of $\mathbf{L}$. Then, the spectral decomposition of $\Ll$ provides the p.d. matrix \cite{SmolaKondor2003}
\[p_r(\mathbf{L}) = \sum_{k=1}^n p_r(\lambda_k) u_k u_k^{\intercal}.\]
The corresponding generating GBF $f_{p_r(\mathbf{L})}$ is defined in terms of its Fourier transform as
\[ \hat{f}_{p_r(\mathbf{L})} = (p_r(\lambda_1), \ldots, p_r(\lambda_n)). \]
These p.d. GBF's get relevant when the size $n$ of $G$ is large. Then, the columns of
$p_r(\mathbf{L})$ can be calculated efficiently with simple 
matrix-vector multiplications using (a possibly sparse) $\Ll$.  
\item[(2)] (Variational or polyharmonic splines) Variational splines are based on the
kernel matrix
$$ (\epsilon \mathbf{I}_n + \mathbf{L})^{-s} = \sum_{k=1}^n \frac{1}{(\epsilon + \lambda_k)^s} u_k u_k^{\intercal},$$
being p.d. for $\epsilon > \max \{0, - \lambda_1\}$ and $s > 0$. They are studied in \cite{Pesenson2009,Ward2018interpolating} as
interpolants $y^\circ$ that minimize the functional $\|(\epsilon \mathbf{I}_n + \mathbf{L})^{s/2} x \|$. 
Variational splines can be regarded as GBF interpolants based on the p.d. GBF $f_{(\epsilon \mathbf{I}_n + \mathbf{L})^{-s}}$ defined in the spectral domain as
\[ \hat{f}_{(\epsilon \mathbf{I}_n + \mathbf{L})^{-s}} = \ts \left(\frac{1}{(\epsilon + \lambda_1)^s}, \ldots, \frac{1}{(\epsilon + \lambda_n)^s}\right). \]

\item[(3)] (Diffusion kernels)
The diffusion kernel on a graph \cite{KondorLafferty2002} based on the Mercer decomposition
\[e^{ -t \mathbf{L}} = \sum_{k=1}^n e^{ -t \lambda_k} u_k u_k^{\intercal}\]
is p.d. for all $t \in \Rr$. The Fourier transform of the respective p.d. GBF is given as
\[\hat{f}_{e^{-t \mathbf{L}}} = (e^{-t \lambda_1}, \ldots, e^{-t \lambda_n}). \]
\end{enumerate}

\section{Cartesian products of graphs} \label{sec:cartesianproduct}

\begin{definition}  \label{def:cartesianlaplacian}
Given two graphs $G = (V^G,E^G,\Ll^G)$ and $F = (V^F,E^F,\Ll^F)$ with $n$ and $n'$ nodes, we define the Cartesian product $G \times F$ as the graph with the node set 
$$V^{G \times F} = V^G \times V^F = \{(\node{v},\node{v'}) \ | \ \node{v} \in V^G, \ \node{v}' \in V^F\},$$
the edges
$$E^{G \times F} = \{ (e,e') \in V^{G \times F} \times V^{G \times F} \ | \ e \in E^G \ \text{or} \ e' \in E^F\},$$
and the graph Laplacian
\begin{equation} \label{eq:cartesianLaplacian}
\Ll^{G \times F} = \Ll^G \oplus \Ll^F = \Ll^G \otimes \mathbf{I}_{n'} + 
\mathbf{I}_{n} \otimes \Ll^F \in \Rr^{(n n') \times (n n')}
\end{equation}
corresponding to the Kronecker sum of the matrices $\Ll^G$ and $\Ll^F$. In this definition, $\otimes$ denotes the Kronecker (or tensor) product of two matrices. 
\end{definition}

\begin{remark}
Note that the classical unweighted adjacency matrix $\mathbf{A}^{G \times F}$ for the graph with the nodes $V^{G \times F}$ and the edges $E^{G \times F}$ is given by 
(see \cite[Theorem 33.5]{Hammack2011})
\begin{equation*}
\mathbf{A}^{G \times F} = \mathbf{A}^G \otimes \mathbf{I}_{n'} + 
\mathbf{I}_{n} \otimes \mathbf{A}^F.
\end{equation*}
The standard graph Laplacian $\mathbf{L}_S^{G \times F}$ for the pair 
$(V^{G \times F}, E^{G \times F})$ thus corresponds to the 
Kronecker product of the standard Laplacians $\mathbf{L}_S^{G}$ and $\mathbf{L}_S^{F}$ and, hence, to the definition given in equation \eqref{eq:cartesianLaplacian}. This is the motivation for the definition in equation \eqref{eq:cartesianLaplacian}. But, for other definitions of the graph Laplacian, as for instance the normalized Laplacian, this correspondence is in general not true. Anyway, the identity \eqref{eq:cartesianLaplacian} provides the relevant definition for us as it easily allows to combine arbitrary Laplacians when building the Cartesian product. 
\end{remark}

\begin{lemma} \label{lem:cartesiangraphlaplacian} The graph Laplacian $\Ll^{G \times F}$ has the following properties:
\begin{itemize}
\item[{(i)}] The matrix $\Ll^{G \times F} \in \Rr^{n n' \times n n'}$ is symmetric and a generalized graph Laplacian in the sense of \eqref{eq:generalizedLaplacian}.
\item[{(ii)}] If $\Ll^{G}$ and $\Ll^{F}$ are positive definite, then also $\Ll^{G \times F}$ is positive definite.
\item[{(iii)}] If the spectra of $\Ll^{G}$, $\Ll^{F}$ are contained in $[0,1]$, then the spectrum of $\Ll^{G \times F}$ is contained in $[0,2]$.
\end{itemize}
\end{lemma}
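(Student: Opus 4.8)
The plan is to reduce everything to the spectral behavior of the Kronecker sum. The central fact I would establish first is the eigenvalue formula: if $\lambda_1, \ldots, \lambda_n$ are the eigenvalues of $\Ll^G$ with orthonormal eigenvectors $u_1, \ldots, u_n$, and $\mu_1, \ldots, \mu_{n'}$ those of $\Ll^F$ with orthonormal eigenvectors $w_1, \ldots, w_{n'}$, then $\Ll^{G \times F}$ has eigenvalues $\lambda_i + \mu_j$ with eigenvectors $u_i \otimes w_j$. To verify this I would apply the mixed-product rule $(A \otimes B)(C \otimes D) = (AC)\otimes(BD)$ to obtain
\[
(\Ll^G \otimes \mathbf{I}_{n'} + \mathbf{I}_n \otimes \Ll^F)(u_i \otimes w_j) = (\lambda_i + \mu_j)(u_i \otimes w_j),
\]
and note that the $nn'$ vectors $u_i \otimes w_j$ are orthonormal, since $(u_i \otimes w_j)^\intercal(u_k \otimes w_l) = (u_i^\intercal u_k)(w_j^\intercal w_l)$, hence form a basis of $\Rr^{nn'}$ that exhausts the spectrum.

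With this formula recorded, parts (ii) and (iii) are one-line consequences. For (ii), positive definiteness of $\Ll^G$ and $\Ll^F$ means all $\lambda_i > 0$ and all $\mu_j > 0$, so every eigenvalue $\lambda_i + \mu_j$ is positive and $\Ll^{G\times F}$ is positive definite. For (iii), the hypotheses give $\lambda_i, \mu_j \in [0,1]$, whence $\lambda_i + \mu_j \in [0,2]$, so the spectrum of $\Ll^{G\times F}$ lies in $[0,2]$.

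Part (i) is where the actual bookkeeping sits. Symmetry is routine: $(\Ll^G \otimes \mathbf{I}_{n'})^\intercal = \Ll^G \otimes \mathbf{I}_{n'}$ and likewise for the second summand, so the sum is symmetric. To check the conditions \eqref{eq:generalizedLaplacian}, I would index the product nodes by pairs $(i,j)$ and read off the $((i,j),(k,l))$-entry of $\Ll^{G\times F}$ as $(\Ll^G)_{i,k}\,\delta_{j,l} + \delta_{i,k}\,(\Ll^F)_{j,l}$. The diagonal entry $(i,j)=(k,l)$ equals $(\Ll^G)_{i,i} + (\Ll^F)_{j,j} \in \Rr$, as required by the third condition. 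For the off-diagonal entries I would split into three cases: if $i=k$ but $j\neq l$ the entry is $(\Ll^F)_{j,l}$; if $i\neq k$ but $j=l$ it is $(\Ll^G)_{i,k}$; and if both indices differ it vanishes.

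It then remains to align these cases with the edge set $E^{G\times F}$: two product nodes are adjacent exactly when one coordinate agrees and the other is an edge of the corresponding factor, which is precisely the first two cases, while nodes differing in both coordinates are never adjacent, matching the vanishing third case. Since $\Ll^G$ and $\Ll^F$ are themselves generalized Laplacians, the surviving entries are negative when the relevant factor edge is present and zero otherwise, so \eqref{eq:generalizedLaplacian} holds. I expect the main (if modest) obstacle to be exactly this case analysis in (i): keeping the pair-indexing consistent and correctly aligning the block structure of the Kronecker sum with the definition of the Cartesian-product edges. Once that is in place, nothing else in the lemma requires more than the eigenvalue formula.
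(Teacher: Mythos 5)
Your proposal is correct and follows essentially the same route as the paper: the paper also derives (ii) and (iii) from the eigendecomposition $\Ll^{G\times F}(u_k^G\otimes u_{k'}^F)=(\lambda_k^G+\lambda_{k'}^F)(u_k^G\otimes u_{k'}^F)$ (stated there as a separate lemma citing Hammack et al.), and dismisses (i) as following directly from the defining identity. Your explicit entry-wise case analysis for (i) just fills in details the paper omits, and it is carried out correctly.
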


While $(i)$ can be derived directly from the defining identity in \eqref{eq:cartesianLaplacian}, the properties $(ii)$ and $(iii)$ of Lemma \ref{lem:cartesiangraphlaplacian} follow from the next well-known result related
to the eigendecomposition of $\Ll^{G \times F}$. 

\begin{lemma}[\cite{Hammack2011}, Proposition 33.6]
Let $\hat{G} = \{u_1^{G}, \ldots u_{n}^{G}\}$ and $\hat{F} = \{u_1^{F}, \ldots u_{n'}^{F}\}$ be the spectra of the graphs $G$ and $F$. Then 
\[\widehat{G \times F} = \{u_k^{G} \otimes u_{k'}^{F} \ | \ k \in \{1,\ldots, n\}, k' \in \{1, \ldots, n'\}\}\]
is a complete orthonormal system of eigenvectors of the Laplacian $\Ll^{G \times F}$ that we adopt as spectrum of the graph $G \times F$. The eigenvalue corresponding to the eigenvector $u_k^{G} \otimes u_{k'}^{F}$ is given by
$\lambda_k^{G} + \lambda_{k'}^{F}$. 
\end{lemma}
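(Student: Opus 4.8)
The plan is to reduce the whole statement to the \emph{mixed-product property} of the Kronecker product, namely that $(\mathbf{A} \otimes \mathbf{B})(\mathbf{C} \otimes \mathbf{D}) = (\mathbf{A}\mathbf{C}) \otimes (\mathbf{B}\mathbf{D})$ whenever the products $\mathbf{A}\mathbf{C}$ and $\mathbf{B}\mathbf{D}$ are defined, together with the transpose rule $(\mathbf{A} \otimes \mathbf{B})^\intercal = \mathbf{A}^\intercal \otimes \mathbf{B}^\intercal$. These are standard facts (verifiable by a direct index computation) and, combined with the defining identity \eqref{eq:cartesianLaplacian}, they turn the eigenvalue assertion into a one-line check; orthonormality and completeness then follow from elementary properties of the tensor product together with a dimension count.

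First I would verify that each $u_k^G \otimes u_{k'}^F$ is an eigenvector with the claimed eigenvalue. Applying $\Ll^{G \times F}$ and expanding via \eqref{eq:cartesianLaplacian} gives
\begin{align*}
\Ll^{G \times F}(u_k^G \otimes u_{k'}^F)
&= (\Ll^G \otimes \mathbf{I}_{n'})(u_k^G \otimes u_{k'}^F) + (\mathbf{I}_n \otimes \Ll^F)(u_k^G \otimes u_{k'}^F) \\
&= (\Ll^G u_k^G) \otimes u_{k'}^F + u_k^G \otimes (\Ll^F u_{k'}^F) \\
&= (\lambda_k^G + \lambda_{k'}^F)\,(u_k^G \otimes u_{k'}^F),
\end{align*}
where the second equality is the mixed-product property (using $\mathbf{I}_{n'} u_{k'}^F = u_{k'}^F$ and $\mathbf{I}_n u_k^G = u_k^G$) and the last uses that $u_k^G$ and $u_{k'}^F$ are eigenvectors of $\Ll^G$ and $\Ll^F$ for the eigenvalues $\lambda_k^G$ and $\lambda_{k'}^F$.

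Next I would establish that the $n n'$ vectors form an orthonormal system. Using the transpose rule and the mixed-product property,
\[
(u_k^G \otimes u_{k'}^F)^\intercal (u_l^G \otimes u_{l'}^F) = \bigl((u_k^G)^\intercal u_l^G\bigr)\bigl((u_{k'}^F)^\intercal u_{l'}^F\bigr) = \delta_{k,l}\,\delta_{k',l'},
\]
since $\hat{G}$ and $\hat{F}$ are orthonormal bases of $\mathcal{L}(G)$ and $\mathcal{L}(F)$. As orthonormal vectors are linearly independent and their number $n n'$ equals $\dim \mathcal{L}(G \times F)$, they span the whole space; hence $\widehat{G \times F}$ is a complete orthonormal system of eigenvectors of $\Ll^{G \times F}$, as claimed.

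I do not expect any real obstacle here: the result is a direct consequence of the algebra of the Kronecker product, and the only points requiring care are invoking the mixed-product property with the correct factors and the dimension count for completeness. As a byproduct, this eigendecomposition yields the properties (ii) and (iii) of Lemma \ref{lem:cartesiangraphlaplacian}, since the spectrum of $\Ll^{G \times F}$ consists precisely of the pairwise sums $\lambda_k^G + \lambda_{k'}^F$: positivity of all eigenvalues is inherited in (ii), and sums of values in $[0,1]$ lie in $[0,2]$ for (iii).
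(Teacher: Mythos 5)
Your proof is correct and follows essentially the same route as the paper: apply the Kronecker-sum definition of $\Ll^{G \times F}$ to $u_k^G \otimes u_{k'}^F$ via the mixed-product property to read off the eigenvalue $\lambda_k^G + \lambda_{k'}^F$, and note that tensor products of orthonormal bases give an orthonormal basis of $\mathcal{L}(G \times F)$. You merely spell out the orthonormality check and the dimension count that the paper leaves implicit.
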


\begin{proof}
As the demonstration is very elementary, we include it at this place. $\hat{G}$ and $\hat{F}$ form complete orthonormal systems for $\mathcal{L}(G)$ and $\mathcal{L}(F)$, respectively. Therefore, $\widehat{G \times F}$ is an orthonormal basis for the space $\mathcal{L}(G \times F)$. Moreover, the definition \eqref{eq:cartesianLaplacian} of the graph Laplacian $\Ll^{G \times F}$ provides the identities
$$\Ll^{G \times F} (u_k^{G} \otimes u_{k'}^{F}) = \Ll^{G} u_k^{G} \otimes u_{k'}^{F} + 
 u_k^{G} \otimes \Ll^{F} u_{k'}^{F} = (\lambda_k^{G} + \lambda_{k'}^{F}) (u_k^{G} \otimes u_{k'}^{F}).$$ 
\end{proof}

For a signal $x \in \mathcal{L}(G \times F)$ we can naturally parametrize the graph Fourier transform $\hat{x}$ by a tuple $(k,k') \in \{1, \ldots, n\} \times \{1, \ldots, n'\}$ of indices and write
\begin{equation} \label{eq:GFT2D} \hat{x}_{k,k'} := (u_k^{G} \otimes u_{k'}^{F})^{\intercal} x = 
\sum_{i = 1}^{n} \sum_{i' = 1}^{n'} x(\node{v}_i , \node{v}_{i'}) u_{k}^{G}(\node{v}_i) u_{k'}^{F}(\node{v}_{i'}).
\end{equation}
With the Fourier matrix $\mathbf{U}^{G \times F}$ given as
\[\mathbf{U}^{G \times F} = \mathbf{U}^{G} \otimes \mathbf{U}^{F}.\]
we can also formulate the Fourier transform and its inverse more compactly as
\[ \hat{x} = (\mathbf{U}^{G \times F})^\intercal x, \quad \hat{x} = \mathbf{U}^{G \times F} x.\]
The convolution on $G \times F$ can then be written as
\[ x \ast y = \mathbf{U}^{G \times F} \mathbf{M}_{\hat{x}}(\mathbf{U}^{G \times F})^\intercal y. \]

\begin{lemma} \label{lem:tensorproduct} Let $f,f' \in \mathcal{L}(G)$ and 
$e,e' \in \mathcal{L}(F)$ be signals on their respective graphs. Then:
\begin{itemize}
\item[(i)] $(\widehat{f \otimes e})_{k,k'} = (\hat{f} \otimes \hat{e})_{k,k'}
= \hat{f}_k \, \hat{e}_{k'}$.
\item[(ii)] $(f \otimes e) \ast (f' \otimes e') = (f \ast f') \otimes (e \ast e')$.
\item[(iii)] If $f$ is p.d. on $G$ and $e$ is p.d. on 
$F$, then $f \otimes e$ is p.d. on $G \times F$.
\end{itemize}
\end{lemma}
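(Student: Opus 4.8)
The plan is to push every claim into the spectral domain, where the tensor-product structure of $G \times F$ decouples into independent factors on $G$ and $F$, and then to invoke three already-established ingredients: the parametrization of the product spectrum by the eigenvectors $u_k^{G} \otimes u_{k'}^{F}$ in \eqref{eq:GFT2D}, the convolution-as-pointwise-multiplication identity \eqref{eq:spectralfilter1}, and the spectral characterization of positive definiteness from Theorem \ref{thm:Bochner}. Each of the three parts then reduces to a short computation with Kronecker products.

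For (i), I would begin from the defining formula $(\widehat{f \otimes e})_{k,k'} = (u_k^{G} \otimes u_{k'}^{F})^{\intercal}(f \otimes e)$ and apply the mixed-product property of the Kronecker product in its vector form, namely $(a \otimes b)^{\intercal}(c \otimes d) = (a^{\intercal}c)(b^{\intercal}d)$, taking $a = u_k^{G}$, $b = u_{k'}^{F}$, $c = f$, $d = e$. This collapses the product-graph coefficient to $((u_k^{G})^{\intercal}f)\,((u_{k'}^{F})^{\intercal}e) = \hat{f}_k\,\hat{e}_{k'}$, which is exactly the asserted equality $(\widehat{f \otimes e})_{k,k'} = (\hat{f} \otimes \hat{e})_{k,k'} = \hat{f}_k\,\hat{e}_{k'}$.

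For (ii), I would check the identity coefficientwise in the Fourier domain and then appeal to injectivity of the graph Fourier transform on $G \times F$, which holds because $\widehat{G \times F}$ is an orthonormal basis of $\mathcal{L}(G \times F)$. On the left-hand side, the definition of convolution \eqref{eq:spectralfilter1} gives $\widehat{(f \otimes e) \ast (f' \otimes e')}_{k,k'} = (\widehat{f \otimes e})_{k,k'}\,(\widehat{f' \otimes e'})_{k,k'}$, which part (i) turns into $\hat{f}_k\hat{e}_{k'}\hat{f}'_k\hat{e}'_{k'}$. On the right-hand side, part (i) factors the coefficient as $\widehat{(f \ast f')}_k\,\widehat{(e \ast e')}_{k'}$, and the single-graph convolution theorem rewrites the two factors as $\hat{f}_k\hat{f}'_k$ and $\hat{e}_{k'}\hat{e}'_{k'}$. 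The two sides agree for every $(k,k')$, so the signals coincide. For (iii), I would translate positive definiteness into positivity of Fourier coefficients via Theorem \ref{thm:Bochner}: $f$ p.d. on $G$ yields $\hat{f}_k > 0$ for all $k$, and $e$ p.d. on $F$ yields $\hat{e}_{k'} > 0$ for all $k'$. By part (i) every product-graph coefficient satisfies $(\widehat{f \otimes e})_{k,k'} = \hat{f}_k\hat{e}_{k'} > 0$, so Theorem \ref{thm:Bochner} applied on $G \times F$ certifies that $f \otimes e$ is positive definite.

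The arguments are essentially routine, so the obstacles are bookkeeping rather than conceptual. The main points requiring care are keeping the double-index parametrization $(k,k')$ of the product spectrum consistent throughout, and applying the mixed-product identity to the transposed (row) eigenvectors rather than to matrices. The one genuinely conceptual step occurs in (iii): I must recognize that $f \otimes e$ plays the role of a GBF on the product graph, so that Theorem \ref{thm:Bochner}, stated for a single graph, applies \emph{verbatim} with $G \times F$ in place of $G$ and $\widehat{G \times F}$ in place of $\hat{G}$.
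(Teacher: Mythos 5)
Your proposal is correct and follows essentially the same route as the paper: part (i) by factoring the product-spectrum inner product (your Kronecker mixed-product identity is just the closed form of the paper's explicit double sum), part (ii) by comparing Fourier coefficients and inverting, and part (iii) by the positivity criterion of Theorem \ref{thm:Bochner} applied on $G \times F$. No gaps.
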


\begin{proof}
$(i)$ With the Fourier transform on the Cartesian product $G \times F$ given in \eqref{eq:GFT2D}, the Fourier transform of the tensor product $f \otimes e$ reads as
\begin{equation*} (\widehat{f \otimes e})_{k,k'}  = 
\sum_{i = 1}^{n} \sum_{i' = 1}^{m} f(\node{v}_i)  e(\node{v}_{i'}) u_{k}^{G}(\node{v}_i) u_{k'}^{F}(\node{v}_{i'}) = \hat{f}_k \, \hat{e}_{k'}.\end{equation*}
$(ii)$ The definition of the convolution $\ast$ on $G \times F$ and the Fourier formula in $(i)$ yield
\[\widehat{((f \otimes e) \ast (f' \otimes e'))}_{k,k'} = 
\widehat{(f \otimes e)}_{k,k'} \widehat{(f' \otimes e')}_{k,k'}
= \hat{f}_k \, \hat{f'}_{k} \hat{e}_{k'} \, \hat{e'}_{k'}\
= \widehat{(f \ast f')}_{k} \widehat{(e \ast e')}_{k'}. \] 
Taking the inverse Fourier transform on both side (and applying $(i)$ on the right hand side), we get the stated result.\\
$(iii)$ The characterization of p.d. functions in Theorem \ref{thm:Bochner} in terms of their Fourier transform yields entirely positive Fourier coefficients $\hat{f}_k > 0$ and $\hat{e}_{k'}$ for the functions $f$ and $e$. Thus, by $(i)$ also all the Fourier coefficients $(\widehat{f \otimes e})_{k,k'} > 0$ of $f \otimes e$ are positive. This, on the other hand implies that $f \otimes e$ is p.d. on $G \times F$.
\end{proof}

\section{Feature-augmented GBF-RLS classification on graphs} \label{sec:psiGBFRLS}

\subsection{Construction of Feature-Augmented GBF's for semi-supervised classification}
\label{sec:construction}

\textbf{Goal.} Starting point for our classification task is a set of $N<n$ labels $y(\node{w}_1), \ldots, y(\node{w}_N) \in \{-1,1\}$ at the nodes
$W = \{\node{w}_1, \ldots, \node{w}_n\}$ of the graph $G$.
From this training set we aim at learning a function $y \in \mathcal{L}(G)$ that provides a classification $y(\node{v}) \in \{-1,1\}$ for all nodes $\node{v} \in V$ of the graph $G$. For this task, we will use a kernel-based RLS scheme as introduced in Section \ref{sec:kernelmethods}. 

In the following, we provide the general construction principle on how an ordinary p.d. GBF-kernel $K_f$ on $G$ can be augmented with additional feature information in order to obtain an augmented kernel
$K_{\vect{\psi}}$. For this, we will use the
tensor-product techniques introduced in the last section. \vspace{1mm}

\noindent \textbf{General setting.} In addition to the graph $G$, we assume to have $d \in \Nn$ feature graphs $F_1, \ldots, F_d$ and
$d$ corresponding feature maps $\psi_1: V \to V^{F_1}, \ldots, \psi_d: V \to V^{F_d}$. Further, for the graph $G$ and every feature graph $F_i$ we fix positive definite GBF's $f$ and $f^{F_i}$, $i \in \{1,\ldots,d\}$, respectively. The corresponding p.d. kernels are denoted by $K_f$ and $K_{f^{F_i}}$, respectively. 

\noindent \textbf{Construction.}
On the Cartesian product graph $G \times F_1 \times \cdots \times F_d$, we construct the tensor-product kernel $K_f \otimes K_{f_1} \otimes \cdots \otimes K_{f_d}$. Based on the embedding 
$$\vect{\psi}: V \to V \times V^{F_1} \times \cdots \times V^{F_d}, \quad \vect{\psi}(\node{v})  :=  (\node{v},\psi_1(\node{v}), \ldots, \psi_d(\node{v})),$$
we then define the feature-augmented kernel $K_{\vect{\psi}}$ on $V \times V$ as
\begin{align} \notag K_{\vect{\psi}}(\node{v},\node{w}) &:= \left(
\mathbf{C}_{\delta_{\node{w}}} f \otimes \mathbf{C}_{\delta_{\psi_1(\node{w})}} f^{F_1}
\otimes \cdots \otimes \mathbf{C}_{\delta_{\psi_d(\node{w})}} f^{F_d} \right)
(\vect{\psi}(\node{v})) \\ 
&= \mathbf{C}_{\delta_{\node{w}}} f(\node{v}) \, \mathbf{C}_{\delta_{\psi_1(\node{w})}} f^{F_1} (\psi_1(\node{v})) \cdots \mathbf{C}_{\delta_{\psi_d(\node{w})}} f^{F_d} (\psi_d(\node{v})) \notag \\
&= \underbrace{K_{f}(\node{v},\node{w})}_{\text{GBF centered at $\node{w}$}} 
\underbrace{K_{f^{F_1}}(\psi_1(\node{v}),\psi_1(\node{w})) \cdots K_{f^{F_d}}(\psi_d(\node{v}),\psi_d(\node{w}))}_{\text{Update by features $\psi_1, \ldots, \psi_d$}}. \label{eq:augmentedkernel}
\end{align}

The so-defined feature-augmented kernel $K_{\vect{\psi}}$ is not necessarily generated by a GBF. Nevertheless, as the columns $K_{\vect{\psi}}(\cdot, \node{w})$ can be regarded as updates of a GBF centered at $\node{w}$, we will refer to them as feature-augmented GBF's, or shortly, \psiGBF{}'s. 

The tensor-product construction implies the positive definiteness of the augmented kernel $K_{\vect{\psi}}$.

\begin{theorem} \label{thm-augmentedGBFRLS}
If $f$ is positive definite on $G$ and all $f^{F_i}$ are positive definite on $F_i$, $i\in\{1,\ldots,d\}$, then 
the augmented kernel $K_{\vect{\psi}}$ is positive definite on $G$. In particular, the RLS functional \eqref{eq:RLSfunctional} based on the augmented kernel $K_{\vect{\psi}}$ has a unique minimizer $y_{\vect{\psi}}^*$ with the representation  
\begin{equation} \label{eq:psirepresentertheorem}
y_{\vect{\psi}}^*(\node{v}) = \sum_{i = 1}^N c_i K_{\vect{\psi}}(\node{v},\node{w}_i).
\end{equation}
\end{theorem}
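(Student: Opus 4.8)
The plan is to deduce the positive definiteness of $K_{\vect{\psi}}$ from the positive definiteness of the tensor-product kernel on the Cartesian product graph $G \times F_1 \times \cdots \times F_d$, and then to obtain the second assertion directly from the general RLS theory of Section \ref{subsec:kernelRLS}. The whole argument rests on viewing $K_{\vect{\psi}}$ as a pullback of the product kernel along the embedding $\vect{\psi}$.

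First I would show that the tensor product $f \otimes f^{F_1} \otimes \cdots \otimes f^{F_d}$ is positive definite on the product graph. For two factors this is precisely Lemma \ref{lem:tensorproduct}(iii); for $d+1$ factors it follows by a routine induction on the number of factors, or directly from Theorem \ref{thm:Bochner}, since by Lemma \ref{lem:tensorproduct}(i) the Fourier coefficients of the tensor product are the products of the individual Fourier coefficients, each of which is strictly positive. Denote the associated p.d. kernel on the product node set by $K_{\otimes}$.

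Next I would record that $K_{\otimes}$ factorizes over the coordinates of the product graph. Writing its Mercer decomposition (Theorem \ref{thm:Bochner}) over the product spectrum and using the tensor structure of the eigenvectors, the multiple sum separates into a product of the individual Mercer sums; evaluating this factorization at a pair $(\vect{\psi}(\node{v}),\vect{\psi}(\node{w}))$ reproduces exactly the right-hand side of the defining identity \eqref{eq:augmentedkernel}. Hence $K_{\vect{\psi}}$ is the pullback $K_{\vect{\psi}}(\node{v},\node{w}) = K_{\otimes}(\vect{\psi}(\node{v}),\vect{\psi}(\node{w}))$ of $K_{\otimes}$ along $\vect{\psi}$. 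The decisive point is that $\vect{\psi}$ is injective: its first coordinate is the identity $\node{v}\mapsto\node{v}$, so distinct nodes of $G$ are sent to distinct nodes of the product graph. Enumerating $V=\{\node{v}_1,\ldots,\node{v}_n\}$, the matrix $\mathbf{K}_{\vect{\psi}}=[K_{\otimes}(\vect{\psi}(\node{v}_i),\vect{\psi}(\node{v}_j))]_{i,j}$ is therefore a principal submatrix, with no repeated indices, of the full Gram matrix of $K_{\otimes}$ on the product graph. Since the latter is symmetric and positive definite, its principal submatrix $\mathbf{K}_{\vect{\psi}}$ is positive definite by the inclusion principle \cite[Theorem 4.3.15]{HornJohnson1985}; symmetry of $K_{\vect{\psi}}$ is immediate from the symmetry of each factor kernel. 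This proves that $K_{\vect{\psi}}$ is p.d. on $G$ in the sense of Definition \ref{def:pd}.

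The second statement then requires no further work: once $K_{\vect{\psi}}$ is p.d., the entire machinery of Section \ref{subsec:kernelRLS} applies verbatim to the native space $\mathcal{N}_{K_{\vect{\psi}}}$, so the representer theorem gives the unique minimizer $y_{\vect{\psi}}^*$ in the form \eqref{eq:psirepresentertheorem}, with coefficients determined by the uniquely solvable linear system \eqref{eq:computationcoefficients} assembled from $K_{\vect{\psi}}$. I expect the only genuinely non-routine step to be the injectivity argument that upgrades the generic positive semi-definiteness of a pullback to full positive definiteness; the factorization of $K_{\otimes}$ and the appeal to the inclusion principle are essentially book-keeping once the pullback viewpoint has been adopted.
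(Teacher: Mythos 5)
Your proposal is correct and follows essentially the same route as the paper: positive definiteness of the tensor-product GBF on the Cartesian product via Lemma \ref{lem:tensorproduct}, identification of $K_{\vect{\psi}}$ as the restriction of the product kernel to the nodes $\vect{\psi}(V)$, the inclusion principle \cite[Theorem 4.3.15]{HornJohnson1985} for the principal submatrix, and then the representer theorem. The only difference is that you make explicit the injectivity of $\vect{\psi}$ (via its identity first coordinate), which is indeed the point that upgrades mere positive semi-definiteness of a pullback to positive definiteness and which the paper's proof leaves implicit.
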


\begin{proof}
If $f$ and all $f^{F_i}$ are p.d., then Lemma \ref{lem:tensorproduct} $(iii)$ guarantees 
that the function $f \otimes f^{F_1} \otimes \cdots \otimes f^{F_d}$ is p.d. on the
Cartesian graph product $G \times F_1 \times \cdots \times F_d$. Consequently, Lemma \ref{lem:tensorproduct} $(ii)$ ensures that the kernel
\[K_f \otimes K_{f^{F_1}} \otimes \cdots \otimes K_{f^{F_d}} = 
K_{f \otimes f^{F_1} \otimes \cdots \otimes f^{F_d}}\]
is positive definite on the Cartesian product. By the inclusion principle (see \cite[Theorem 4.3.15]{HornJohnson1985}), this implies that also the principal subkernel given by
\[K_{\vect{\psi}}(\node{v},\node{w}) = 
(K_f \otimes K_{f^{F_1}} \otimes \cdots \otimes K_{f^{F_d}})(\vect{\psi}(\node{v}),\vect{\psi}(\node{w})), \quad \node{v},\node{w} \in V,\]
is positive definite. The representer theorem (see \cite{Schoelkopf2002}) then guarantees that the unique minimizer $y_{\vect{\psi}}^*$ of the RLS functional \eqref{eq:RLSfunctional} can be written in the form \eqref{eq:psirepresentertheorem}.
\end{proof}

\noindent \textbf{SSL classification.} Once the solution $y_{\vect{\psi}}^*(\node{v})$ of the augmented RLS problem in Theorem \ref{thm-augmentedGBFRLS} is calculated a binary classifier on the graph $G$ is obtained by calculating $\sgn (y_{\vect{\psi}}^*)$, where
\[ \sgn(y) = \begin{cases}-1&\text{if } y<0,\\ 0&\text{if } y=0, \\ 1& \text{if } y\geq 0.\end{cases}\]  

\begin{remark} Note that the presented augmentation strategy works also if the single kernels on the graphs $G$, $F_1, \ldots, F_d$ are not generated by GBF's. The usage of GBF's however simplifies the description of the kernels and allows to include geometric properties of the graphs into the kernels. Note also that instead of the presented tensor-product construction it is theoretically possible to define augmented kernels based on general kernels on the Cartesian product $G \times F_1 \times \cdots \times F_d$. In this case, it gets however more difficult to obtain relations between initial and augmented kernels. Also, the tensor-product construction allows to calculate the augmented kernel by a simple update procedure with a low computational complexity. We will investigate this further in Section \ref{sec:complexity}.
\end{remark}

\subsection{Algorithm to obtain the feature-augmented GBF-RLS classification}
The following Algorithm \ref{algorithm1} summarizes the procedure to compute the 
RLS solution in Theorem \ref{thm-augmentedGBFRLS} based on the feature-augmented kernel $K_{\vect{\psi}}$ defined in \eqref{eq:augmentedkernel}. As the construction is based on generating GBF's, we will denote the corresponding solution $y_{\vect{\psi}}^*$ as \psiGBFRLS{} solution.

\begin{algorithm}[H] \label{algorithm1}

\caption{RLS classification with feature-augmented graph basis functions (\psiGBF{}'s)}

\vspace{3mm}

\KwIn{$(i)$ $N$ labels $y(\node{w}_1), \ldots, y(\node{w}_N) \in \{-1,1\}$ at node set
$W = \{\node{w}_1, \ldots, \node{w}_n\}$ of the graph $G$. 
\newline $(ii)$ $d$ feature graphs $F_1, \ldots, F_d$, \newline
 $(iii)$ $d$ feature maps $\psi_1: V \to V^{F_1}, \ldots ,\psi_d: V \to V^{F_d}$,
 \newline
 $(iv)$ $d+1$ positive definite graph basis functions: $f:V \to \Rr$, and \newline
 \phantom{$(iv)$} $f^{F_1}:V^{F_1} \to \Rr, \ldots, f^{F_d}:V^{F_d} \to \Rr$.     
}

\vspace{1mm}

\textbf{Calculate} \\ \quad $(i)\phantom{i}$ $N$ generalized translates 
$\mathbf{C}_{\delta_{\node{w}_1}} f = \delta_{\node{w}_1} \ast f, \ldots, \mathbf{C}_{\delta_{\node{w}_N}} f = \delta_{\node{w}_N} \ast f$ on $G$, \\[1mm]
\quad $(ii)$ $N$ generalized translates  $\mathbf{C}_{\delta_{\psi_i(\node{w}_1})} f^{F_i} = \delta_{\psi_i(\node{w}_1}) \ast f^{F_i}, \ldots, \mathbf{C}_{\delta_{\psi_i(\node{w}_N})} f^{F_i} = \delta_{\psi_i(\node{w}_N}) \ast f^{F_i}$ \\ \quad \phantom{$(ii)$} on the $d$ feature graphs $F_i$, $i \in \{1,\ldots,d\}$.

\vspace{2mm}

\textbf{Construct} the feature-augmented graph basis functions (\psiGBF{}'s):
\begin{align*} K_{\vect{\psi}}(\node{v},\node{w}_k) &= \left(
\mathbf{C}_{\delta_{\node{w}_k}} f \otimes \mathbf{C}_{\delta_{\psi_1(\node{w}_k)}} f^{F_1}
\otimes \cdots \otimes \mathbf{C}_{\delta_{\psi_d(\node{w}_k)}} f^{F_d} \right)
(\node{v}, \psi_1(\node{v}), \ldots, \psi_d(\node{v})) \\ 
&= K_{f}(\node{v},\node{w}_k) \, K_{f^{F_1}}(\psi_1(\node{v}),\psi_1(\node{w}_k)) \cdots K_{f^{F_d}}(\psi_d(\node{v}),\psi_d(\node{w}_k)), \quad k \in \{1, \ldots, N\}.
\end{align*}

\vspace{2mm}

\textbf{Solve} the linear system of equations

\begin{equation*} \label{eq:computationcoefficientsGBF} 
\underbrace{\begin{pmatrix} K_{\vect{\psi}}(\node{w_1},\node{w}_1)  + \gamma N & K_{\vect{\psi}}(\node{w_2},\node{w}_1)  & \ldots & K_{\vect{\psi}}(\node{w_N},\node{w}_1)  \\
K_{\vect{\psi}}(\node{w_1},\node{w}_2)  & K_{\vect{\psi}}(\node{w_2},\node{w}_2)  + \gamma N & \ldots & K_{\vect{\psi}}(\node{w_N},\node{w}_2)  \\
\vdots & \vdots & \ddots & \vdots \\
K_{\vect{\psi}}(\node{w_1},\node{w}_N)  & K_{\vect{\psi}}(\node{w_2},\node{w}_N)  & \ldots & K_{\vect{\psi}}(\node{w_N},\node{w}_N)  + \gamma N
\end{pmatrix}}_{\mathbf{K}_{\vect{\psi},W} + \gamma N \mathbf{I}_N} 
\begin{pmatrix} c_1 \\ c_2 \\ \vdots \\ c_N \end{pmatrix}
= \begin{pmatrix} y(\node{w}_1) \\ y(\node{w}_2) \\ \vdots \\ y(\node{w}_N) \end{pmatrix}.
\end{equation*}

\textbf{Calculate} the feature-augmented \psiGBFRLS{} solution
\[ y_{\vect{\psi}}^*(\node{v}) = \sum_{k=1}^N c_k K_{\vect{\psi}}(\node{v},\node{w}_k).\]

\noindent A \textbf{binary classification} on $G$ according to this solution is then given by $\sgn (y_{\vect{\psi}}^*)$.
\end{algorithm}

\subsection{Computational complexities} \label{sec:complexity}

\noindent{\bfseries Complexity of the augmentation step.}
The augmentation formula in \eqref{eq:augmentedkernel} allows to construct the new feature-augmented kernel from an initial kernel by a simple update strategy. When adding a single extra feature it is not necessary to calculate an entirely new kernel on a Cartesian product, only a simple Schur-Hadamard product between the initial kernel and the feature information based on the just added feature kernel is required. This considerably lowers the computational costs as all operations can be performed on the nodes of the graph $G$ and not on the Cartesian product $G \times F_1 \times \cdots \times F_d$. With known feature kernel and feature map, a single update step can be performed in $n N$ arithmetic multiplications, for $d$ features this results in $d n N$ multiplications.  

\noindent{\bfseries Complexity of the computation of the RLS solution.}
As we suppose that the number of labels $N$ is low, the calculation of the RLS solution $y_{\vect{\psi}}^*$ itself is inexpensive. The solution of the linear system \eqref{eq:computationcoefficients} for the expansion coefficients can be performed in 
$\mathcal{O}(N^3)$ arithmetic operations, the summation of the basis functions in \eqref{eq:psirepresentertheorem} requires $(2N-1) n $ arithmetic operations. 

\noindent{\bfseries Complexity of the calculation of the GBF's.}
The calculation of this step requires some effort if it is necessary to calculate the entire eigendecomposition of the graph Laplacian $\Ll$. In this case, $\mathcal{O}(n^3)$ arithmetic steps are required to calculate the spectrum of the graph. This is certainly a drawback for larger graphs. In such a scenario, it is recommendable to use GBF's that can be calculated without the graph Fourier transform. This is for instance possible for the GBF's in Section \ref{sec:examples} (1). In this example, the entire set of shifted GBF's $\mathbf{C}_{\delta_{\node{w}_1}} f, \ldots, \mathbf{C}_{\delta_{\node{w}_n}} f$ can be calculated in $\mathcal{O}(n^2 N)$ arithmetic operations. A further reduction is possible if the graph Laplacian $\Ll$ is a sparse matrix.  

\section{Examples of feature maps and feature graphs} \label{sec:featuremaps}

\subsection{Binary classifications as features} \label{sec:examplebinaryclassification}
A priori known or learned classifications of the graph nodes are valid additional features that allow to incorporate unlabeled nodes in a SSL scheme. In this first example, we focus on binary classifications and describe how they can be included in the augmentation step with help of explicit feature maps and feature graphs.  

Starting with a given binary classification $\psi_{\bin}: V \to \{-1,1\}$, we model the feature graph $F_{\bin}$ as a binary graph consisting of two nodes $V^{F_{\bin}} = \{-1,1\}$, a single undirected edge connecting these two nodes, and the (standard) graph Laplacian 
\begin{equation} \label{eq:binlaplacian} \Ll^{F_{\bin}} = \begin{pmatrix} 1 & -1 \\ -1 & 1 \end{pmatrix}.\end{equation}
The spectral decomposition of this graph Laplacian is given as
\[ \Ll^{F_{\bin}} = \begin{pmatrix} \frac{1}{\sqrt2} & - \frac{1}{\sqrt2} \\ \frac{1}{\sqrt2} &  \frac{1}{\sqrt2} \end{pmatrix} 
\begin{pmatrix} 0 & 0 \\ 0 & 2 \end{pmatrix} \begin{pmatrix} \frac{1}{\sqrt2} & -\frac{1}{\sqrt2} \\ \frac{1}{\sqrt2} &  \frac{1}{\sqrt2} \end{pmatrix}^{\intercal}.\]
We define a correlation kernel $K^{F_{\bin}}$ on the binary graph $F_{\bin}$ in terms of the matrix
\begin{equation} \label{eq:bincorrelationmatrix}
\Kk^{F_{\bin}} = \begin{pmatrix} 1 & \alpha \\ \alpha & 1 \end{pmatrix}
= (1 + \alpha)  \mathbf{I}_2 - \alpha \Ll^{F_{\bin}},
\end{equation}
which is positive semi-definite in the parameter range $-1 \leq \alpha \leq 1$. The matrix $\Kk^{F_{\bin}}$ is a (linear) polynomial of the matrix $\Ll^{F_{\bin}}$, and generated by the GBF $f_{\bin} = \sqrt{2} (\alpha, 1)^{\intercal}$ with the 
Fourier transform $\hat{f}_{\bin} = (1+\alpha,1-\alpha)^{\intercal}$. Finally, the update matrix in \eqref{eq:augmentedkernel} for a single binary classification is explicitly given by
$\Kk^{F_{\bin}}(\psi_{\bin}(\node{v}),\psi_{\bin}(\node{w}))$.  

\subsection{Similarity graphs to incorporate attributes} \label{sec:similarity}

Attributes on the nodes $\node{v}$ of the graph $G$ are usually encoded 
as vectors $r_{\node{v}}$ in $\Rr^d$. The set of all attributes forms a point cloud $V^{F_{\SIM}} \subset \Rr^d$ of $n' \leq n$ elements. Generating a similarity graph $F_{\SIM}$ out of this point cloud, we gather additional information of the unlabeled nodes that can be integrated in a SSL scheme. A simple way to define the similarity graph $F_{\SIM}$ from the point cloud $V^{F_{\SIM}}$ is to consider the adjacency matrix $\mathbf{A}^{F_{\SIM}}$ with the edge weights
\[\mathbf{A}^{F_{\SIM}}_{i,j} = \mathrm{e}^{- \alpha \|r_{\node{v}_i} - r_{\node{v}_j}\|^2}, \quad i,j \in \{1, \ldots, n'\}. \]
A corresponding graph Laplacian can then be defined as $\mathbf{L}^{F_{\SIM}} = -\mathbf{A}^{F_{\SIM}}$. As a correlation kernel $K^{F_{\SIM}}$ on $F_{\SIM}$ we can further use the positive definite adjacency matrix $\mathbf{A}^{F_{\SIM}}$, i.e., we set
\begin{equation} \label{eq:simcorrelationmatrix} 
\Kk^{F_{\SIM}} = \mathbf{A}^{F_{\SIM}}.
\end{equation}
The kernel matrix $\Kk^{F_{\SIM}}$ describes the correlation between the attributes $r_{\node{v}}$ and is generated by a p.d. GBF $f_{\SIM}$. The GBF $f_{\SIM}$ can be characterized in terms of the graph Fourier transform on $F_{\SIM}$ as
\[\hat{f}_{\SIM} = (-\lambda_{n'}, - \lambda_{n'-1}, \ldots - \lambda_1)^{\intercal},\]
where $\lambda_1, \ldots, \lambda_{n'}$ are the increasingly ordered (negative) eigenvalues
of the Laplacian $\mathbf{L}^{F_{\SIM}}$. The feature map $\psi_{\SIM}$ on $G$ mapping on the similarity graph $F_{\SIM}$ is in this case given by
$\psi_{\SIM}(\node{v}) = r_{\node{v}}$. 

\subsection{Feature maps via unsupervised learning}

Unsupervised learning methods provide automated ways to obtain feature maps on graphs. Especially graph clustering algorithms \cite{schaeffer2007} are powerful methods to extract geometric information of the graph. This information can then be used to generate feature graphs and feature maps as for instance described in the last two subsections. Prominent examples of such clustering methods are gaussian mixture models, density-based spatial clustering, spectral clustering or $k$-means. In the numerical examples at the end of this section we will apply a spectral clustering based on the normalized-cut method of Shi and Malik \cite{shimalik2000} to obtain an unsupervised binary classification of the graph nodes. 

\section{Augmentation with binary classification feature} \label{sec:augmentationbinary}

We want to get a better understanding on how feature augmentation affects a given kernel. To have a framework in which theoretical statements are possible, we will focus on a simpler kernel setting in which a given GBF-kernel is augmented solely with a binary classification feature. 

\begin{assumption} \label{ass:1}
Throughout this section, we assume that:
\begin{itemize}
\item[(i)] $f$ is a positive definite GBF on $G$ with corresponding kernel $K_f$ and kernel matrix $\Kk_f$. 
\item[(ii)] $\psi_{\bin}: V \to V^{F_{\bin}}$ is a binary feature map on $G$. 
\item[(iii)] The feature kernel $K^{F_{\bin}}$ is given by the graph Laplacian \eqref{eq:binlaplacian} on $F_{\bin}$, i.e., $\Kk^{F_{\bin}}=\Ll^{F_{\bin}}$.
\item[(iv)] The augmented kernel $K_{\vect{\psi}}$ is given by
\begin{equation*} \label{eq:binaryaugmentedkernel} K_{\vect{\psi}}(\node{v}, \node{w}) = K_f(\node{v},\node{w}) K^{F_{\bin}} (\psi_{\bin}(\node{v}), \psi_{\bin}(\node{w})).
\end{equation*}
\end{itemize}
\end{assumption}
Note that the given feature matrix $\Kk^{F_{\bin}} = \Ll^{F_{\bin}}$ is only positive semi-definite. This is however not a restriction for our upcoming considerations as the following Lemma demonstrates.  

\begin{lemma} \label{lem:binaugmentedkernel} In the setting of Assumption \ref{ass:1}, the augmented kernel $K_{\vect{\psi}}$ has the following properties:
\begin{itemize}
\item[(i)] $K_{\vect{\psi}}(\node{v},\node{w}) = \psi_{\bin}(\node{v})
\mathbf{C}_{\delta_{\node{w}}} f (\node{v}) \psi_{\bin}(\node{w})$.
\item[(ii)] The Mercer decomposition of $K_{\vect{\psi}}$ is given by
\[K_{\vect{\psi}}(\node{v},\node{w}) = \sum_{k = 1}^n \hat{f}_k 
(\psi_{\bin}(\node{v}) u_k(\node{v})) (\psi_{\bin}(\node{w}) u_k(\node{w})). \]
Moreover, $\{ \psi_{\bin} u_1, \ldots, \psi_{\bin} u_n\}$ forms a complete orthonormal system of eigenvectors of the matrix $\Kk_{\vect{\psi}}$ with corresponding eigenvalues
$\hat{f}_1, \ldots, \hat{f}_n$. In particular, the kernel $K_{\vect{\psi}}$ is positive definite. 
\item[(iii)] For all subsets $W \subseteq V$ the matrices $\Kk_{f,W}$ and 
$\Kk_{\vect{\psi},W}$ are similar and have the same eigenvalues with the same geometric multiplicities.  
\end{itemize}
\end{lemma}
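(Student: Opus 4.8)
The plan is to rest the whole lemma on a single algebraic observation about the binary feature kernel and then let the three statements fall out in sequence. First I would record that the two nodes of $F_{\bin}$ carry the labels $-1$ and $1$ and that the feature kernel matrix $\Kk^{F_{\bin}} = \Ll^{F_{\bin}} = \begin{pmatrix} 1 & -1 \\ -1 & 1 \end{pmatrix}$ satisfies $K^{F_{\bin}}(a,b) = ab$ for all $a,b \in \{-1,1\}$, as one checks on the four cases. Since $\psi_{\bin}$ takes values in $\{-1,1\}$, this gives $K^{F_{\bin}}(\psi_{\bin}(\node{v}),\psi_{\bin}(\node{w})) = \psi_{\bin}(\node{v})\,\psi_{\bin}(\node{w})$. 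Substituting this into the definition of $K_{\vect{\psi}}$ from Assumption \ref{ass:1}(iv) and using $K_f(\node{v},\node{w}) = \mathbf{C}_{\delta_{\node{w}}}f(\node{v})$ immediately yields statement (i).

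For (ii) I would insert the Mercer decomposition $\mathbf{C}_{\delta_{\node{w}}}f(\node{v}) = \sum_{k=1}^n \hat{f}_k u_k(\node{v}) u_k(\node{w})$ from Theorem \ref{thm:Bochner} into the factorisation from (i); distributing the two sign factors across the sum produces exactly the claimed Mercer decomposition of $K_{\vect{\psi}}$. It then remains to verify that $\{\psi_{\bin} u_1, \ldots, \psi_{\bin} u_n\}$ is orthonormal: because $\psi_{\bin}(\node{v}_i)^2 = 1$ at every node, the inner product $(\psi_{\bin} u_k)^\intercal(\psi_{\bin} u_l)$ collapses to $u_k^\intercal u_l = \delta_{k,l}$, so orthonormality is inherited directly from the Fourier basis. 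The Mercer decomposition is then the spectral decomposition $\Kk_{\vect{\psi}} = \sum_{k=1}^n \hat{f}_k (\psi_{\bin} u_k)(\psi_{\bin} u_k)^\intercal$, from which I read off that each $\psi_{\bin} u_k$ is an eigenvector with eigenvalue $\hat{f}_k$; positive definiteness follows because $f$ being p.d. forces $\hat{f}_k > 0$ for all $k$ (again Theorem \ref{thm:Bochner}).

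For (iii) the key is to recast (i) in matrix form. Writing $\mathbf{M}_{\psi_{\bin}} = \mathrm{diag}(\psi_{\bin}(\node{v}_1), \ldots, \psi_{\bin}(\node{v}_n))$, statement (i) reads $\Kk_{\vect{\psi}} = \mathbf{M}_{\psi_{\bin}} \Kk_f \mathbf{M}_{\psi_{\bin}}$. Since $\mathbf{M}_{\psi_{\bin}}$ has a $\pm 1$ diagonal it is an involution, $\mathbf{M}_{\psi_{\bin}}^{-1} = \mathbf{M}_{\psi_{\bin}}$, so this is a genuine similarity transformation. Restricting both factors to the index set of an arbitrary $W \subseteq V$ gives $\Kk_{\vect{\psi},W} = \mathbf{M}_{\psi_{\bin},W} \Kk_{f,W} \mathbf{M}_{\psi_{\bin},W}^{-1}$, exhibiting $\Kk_{f,W}$ and $\Kk_{\vect{\psi},W}$ as similar matrices; similar matrices share their eigenvalues together with the geometric (and algebraic) multiplicities, which is the assertion.

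I do not expect a genuine obstacle here: the entire lemma hinges on the elementary identity $K^{F_{\bin}}(a,b)=ab$ on $\{-1,1\}$, which converts the Schur-Hadamard update into a diagonal sign conjugation. The only point demanding a little care is observing that this conjugation is simultaneously an orthogonal transformation, since the signs square to one, and it is exactly this fact that preserves both orthonormality in (ii) and the full spectrum in (iii).
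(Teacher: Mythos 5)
Your proposal is correct and follows essentially the same route as the paper: the identity $K^{F_{\bin}}(a,b)=ab$ on $\{-1,1\}$ yields (i), plugging the Mercer decomposition of $K_f$ into (i) and using $\psi_{\bin}^2\equiv 1$ yields (ii), and (iii) follows from (i). Your explicit formulation of (iii) as conjugation by the involutive diagonal matrix $\mathbf{M}_{\psi_{\bin}}$ is a welcome elaboration of a step the paper leaves as an immediate consequence.
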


\begin{proof}
$(i)$ If $\Kk^{F_{\bin}}$ corresponds to the Laplacian $\Ll^{F_{\bin}}$ given in \eqref{eq:binlaplacian}, we have $K^{F_{\bin}} (\psi_{\bin}(\node{v}), \psi_{\bin}(\node{w})) = \psi_{\bin}(\node{v}) \psi_{\bin}(\node{w})$. In the setting of Assumption \ref{ass:1}, this implies $(i)$.\\
$(ii)$ By Theorem \ref{thm:Bochner}, the Mercer decomposition of $K_f$ is given by 
$\mathbf{C}_{\delta_{\node{w}}}f(\node{v}) = \sum_{k=1}^n \hat{f}_k \, u_k(\node{v}) \, u_k(\node{w})$. Plugging this into the identity $(i)$, we directly obtain the Mercer decomposition in $(ii)$. Herein, the fact that $(\psi_{\bin} u_k)^\intercal (\psi_{\bin} u_{k'}) = u_k^\intercal u_{k'} = \delta_{k,k'}$ ensures that the system $\{\psi_{\bin} u_k \ | \ k \in \{1,\ldots,n\}\}$ is a complete orthonormal system of eigenvectors of $\Kk_f$.\\
$(iii)$ This is as well an immediate consequence of the identity $(i)$. 
\end{proof}

\subsection{Consistency results}

\begin{theorem} \label{thm:consistency}
In addition to Assumption \ref{ass:1}, suppose that $K_f$ is positive on $G$, i.e., $K_f(\node{v},\node{w}) > 0$ for all $\node{v},\node{w} \in V$. Then, we get the following consistency results for the \psiGBFRLS{} solution $y_{\vect{\psi}}^*$:
\begin{itemize}
\item[(i)] If $N = 1$ and the single label $y(\node{w}_1) \in \{-1,1\}$ is correctly classified by $\psi_{\bin}$, i.e., $\psi_{\bin}(\node{w}_1) = y(\node{w}_1)$, then  
\[\sgn( y_{\vect{\psi}}^*(\node{v})) = \sgn( y(\node{w}_1) K_{\psi} (\node{v}, \node{w}_1) ) = \sgn ( \psi_{\bin}(\node{v}) \mathbf{C}_{\delta_{\node{w}_1}} f (\node{v}) ) = \psi_{\bin}(\node{v}).\]
\item[(ii)] If $N > 1$ labels $y(\node{w}_1), \ldots, y(\node{w}_N) \in \{-1,1\}$ are correctly classified by the binary classificator $\psi_{\bin}$ and the regularization parameter $\gamma$ for the RLS scheme satisfies $\gamma > 
2 \max_{\node{v}} K_{f}(\node{v},\node{v})$, then  
\[\sgn( y_{\vect{\psi}}^*(\node{v}) ) = \psi_{\bin}(\node{v}).\]
\end{itemize}
\end{theorem}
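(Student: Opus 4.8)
The plan is to normalise away the binary feature and reduce both claims to statements about the plain GBF-kernel $K_f$. By Lemma~\ref{lem:binaugmentedkernel}$(i)$ the augmented kernel factors as $K_{\vect{\psi}}(\node{v},\node{w}) = \psi_{\bin}(\node{v})\,\mathbf{C}_{\delta_{\node{w}}}f(\node{v})\,\psi_{\bin}(\node{w})$, so the representer expansion \eqref{eq:psirepresentertheorem} rewrites as $y_{\vect{\psi}}^*(\node{v}) = \psi_{\bin}(\node{v})\sum_{i=1}^N c_i\,\psi_{\bin}(\node{w}_i)\,K_f(\node{v},\node{w}_i)$. Setting $b_i := \psi_{\bin}(\node{w}_i)c_i$ and $g(\node{v}) := \sum_{i=1}^N b_i\,K_f(\node{v},\node{w}_i)$, it then suffices to prove $g(\node{v})>0$ for all $\node{v}\in V$, since this gives $\sgn(y_{\vect{\psi}}^*(\node{v})) = \sgn(\psi_{\bin}(\node{v})) = \psi_{\bin}(\node{v})$.

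For $(i)$ this is immediate. With $N=1$ the coefficient equation \eqref{eq:computationcoefficients} reduces to $(K_f(\node{w}_1,\node{w}_1)+\gamma)c_1 = y(\node{w}_1)$, whose coefficient is strictly positive; hence $b_1 = \psi_{\bin}(\node{w}_1)c_1 = y(\node{w}_1)c_1 = y(\node{w}_1)^2/(K_f(\node{w}_1,\node{w}_1)+\gamma) > 0$. Since $K_f(\cdot,\node{w}_1)>0$ by hypothesis, $g>0$, and the displayed chain of sign identities in the statement follows.

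For $(ii)$ the plan is to recast the coefficient system in the variable $b$. Writing $\mathbf{D}_{\vect{\psi}} = \mathrm{diag}(\psi_{\bin}(\node{w}_1),\ldots,\psi_{\bin}(\node{w}_N))$, Lemma~\ref{lem:binaugmentedkernel}$(i)$ gives $\mathbf{K}_{\vect{\psi},W} = \mathbf{D}_{\vect{\psi}}\,\mathbf{K}_{f,W}\,\mathbf{D}_{\vect{\psi}}$, and because $\mathbf{D}_{\vect{\psi}}^2 = \mathbf{I}_N$ the system $(\mathbf{K}_{\vect{\psi},W}+\gamma N\mathbf{I}_N)c = y|_W$ becomes $(\mathbf{K}_{f,W}+\gamma N\mathbf{I}_N)\,b = \mathbf{D}_{\vect{\psi}}\,y|_W$. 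Since all labels are correctly classified, $(\mathbf{D}_{\vect{\psi}}\,y|_W)_i = \psi_{\bin}(\node{w}_i)y(\node{w}_i)=1$, so $b$ solves $(\mathbf{K}_{f,W}+\gamma N\mathbf{I}_N)\,b = \mathbb{1}$. It then remains to show $b_i>0$ for every $i$, which together with $K_f>0$ forces $g>0$.

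The crux, and the step I expect to be the main obstacle, is the positivity of $b$: the matrix $\mathbf{K}_{f,W}+\gamma N\mathbf{I}_N$ has strictly positive off-diagonal entries and is therefore not an $M$-matrix, so diagonal dominance alone does not force $(\mathbf{K}_{f,W}+\gamma N\mathbf{I}_N)^{-1}\mathbb{1}>0$; this is exactly where the quantitative threshold on $\gamma$ enters. I would argue in two steps. First, using the Cauchy--Schwarz bound $|K_f(\node{w}_i,\node{w}_j)|\le \max_{\node{v}}K_f(\node{v},\node{v}) =: c_{\max}$ valid for a p.d. kernel, and examining the equation for the maximal component of $b$, I would derive the a priori estimate $\|b\|_\infty \le (\gamma N - (N-1)c_{\max})^{-1}$. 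Second, inserting this into the $i$-th equation $(\gamma N+K_f(\node{w}_i,\node{w}_i))b_i = 1 - \sum_{j\ne i}K_f(\node{w}_i,\node{w}_j)b_j$ and bounding the off-diagonal sum by $(N-1)c_{\max}\|b\|_\infty$, positivity of $b_i$ follows once $(N-1)c_{\max}\|b\|_\infty<1$, i.e. $\gamma N>2(N-1)c_{\max}$. The hypothesis $\gamma>2c_{\max}$ yields $\gamma N>2Nc_{\max}>2(N-1)c_{\max}$, which closes the argument and gives $b_i>0$ for all $i$, hence $g(\node{v})>0$ and $\sgn(y_{\vect{\psi}}^*(\node{v}))=\psi_{\bin}(\node{v})$.
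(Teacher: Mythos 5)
Your proof is correct and follows essentially the same strategy as the paper: both factor out the binary feature via Lemma~\ref{lem:binaugmentedkernel}$(i)$ and reduce the claim to showing that the sign-normalized coefficients $b_i=\psi_{\bin}(\node{w}_i)c_i$ are strictly positive, which combined with $K_f>0$ forces the stated sign identity. The only technical difference is in that positivity step: the paper bounds the $1$-norm of $\Kk_{\vect{\psi},W}/(\gamma N)$ by $1/2$ and uses a Neumann series to conclude that $(\Kk_{\vect{\psi},W}+\gamma N\mathbf{I}_N)^{-1}$ preserves the signs of $y|_W$, whereas you conjugate the system to $(\Kk_{f,W}+\gamma N\mathbf{I}_N)b=\mathbb{1}$ and run a direct componentwise $\ell^\infty$ estimate --- both arguments rest on the same threshold $\gamma>2\max_{\node{v}}K_f(\node{v},\node{v})$.
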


\begin{proof}
$(i)$ For one single label $y(\node{w}_1)$, the \psiGBFRLS solution is given by
\[ y_{\vect{\psi}}^*(\node{v}) = \frac{y(\node{w}_1)}{K_{\vect{\psi}}(\node{w}_1,\node{w}_1) + \gamma}K_{\vect{\psi}}(\node{v},\node{w}_1).\] 
With the identity in Lemma \ref{lem:binaugmentedkernel} $(i)$ and the fact that 
$K_{\vect{\psi}}(\node{w}_1,\node{w}_1) > 0$, we then obtain 
\[\sgn( y_{\vect{\psi}}^*(\node{v})) = \sgn( y(\node{w}_1) K_{\psi} (\node{v}, \node{w}_1) ) = \sgn ( y(\node{w}_1) \psi_{\bin}(\node{w}_1) \mathbf{C}_{\delta_{\node{w}_1}} f (\node{v}) \psi_{\bin}(\node{v})) = \sgn ( \mathbf{C}_{\delta_{\node{w}_1}} f (\node{v}) \psi_{\bin}(\node{v})).\]
As the kernel $K_f$ is positive on $G$, we can thus conclude that
\[\sgn( y_{\vect{\psi}}^*(\node{v})) = \psi_{\bin}(\node{v}).\]
$(ii)$ By Lemma \ref{lem:binaugmentedkernel} $(i)$, the $1$-norm of the matrix $\Kk_{\vect{\psi},W}$ is identical to the $1$-norm of $\Kk_{f,W}$ which is bounded by 
$N \max_{\node{v}} K_{f}(\node{v},\node{v})$. Therefore, for $\gamma > 
2 \max_{\node{v}} K_{f}(\node{v},\node{v})$ the matrix $\Kk_{\vect{\psi},W} + \gamma N \mathbf{I}_n$ is strictly diagonally dominant and the $1$-norm of $\Kk_{\vect{\psi},W}/(\gamma N)$ is less than $1/2$. By applying the Neumann series expansion, we have
\[(\Kk_{\vect{\psi},W} + \gamma N \mathbf{I}_n)^{-1} = \frac{1}{\gamma N}\left( \mathbf{I}_n + \sum_{k=1}^\infty( \ts \frac{1}{\gamma N} \Kk_{\vect{\psi},W})^k \right).\] As the $1$-norm of $\Kk_{\vect{\psi},W}/(\gamma N)$ is less than $1/2$, the sum on the right hand gives a matrix with $1$-norm less than $1$. This implies that 
the matrix $(\Kk_{\vect{\psi},W} + \gamma N \mathbf{I}_n)^{-1}$ is strictly diagonally dominant with positive diagonal entries. This on the other hand implies that 
\[\sgn \left( (\Kk_{\vect{\psi},W} + \gamma N \mathbf{I}_n)^{-1}
(y(\node{w}_1), \ldots, y(\node{w}_N))^{\intercal} \right) = (y(\node{w}_1), \ldots, y(\node{w}_N))^{\intercal}. \]
For the solution of the RLS problem, we then obtain
$$\sgn (y_{\vect{\psi}}^*(\node{v})) = 
\psi_{\bin}(\node{v}) \, \sgn \left( (K_f(\node{v},\node{w}_1)
\psi_{\bin}(\node{w}_1), \ldots, K_f(\node{v},\node{w}_N)\psi_{\bin}(\node{w}_N) )(\Kk_{\vect{\psi},W} + \gamma N \mathbf{I}_n)^{-1} \begin{pmatrix}
y(\node{w}_1) \\ \vdots \\ y(\node{w}_N) \end{pmatrix} \right).$$
As the labels $y(\node{w}_i)$ correspond to the prior values $\psi_{\bin}(\node{w}_i)$
and the kernel $K_f$ is assumed to be strictly positive, the sign of the second term on the right hand side is always positive, and therefore, $\sgn (y_{\vect{\psi}}^*(\node{v})) = 
\psi_{\bin}(\node{v})$ for all $\node{v} \in V$. 
\end{proof}

Theorem \ref{thm:consistency} contains two important messages. 
If only one labeled node is given, then, by $(i)$, the given SSL schemes reproduces the prior
$\psi_{\bin}$. Equally important, if the regularization parameter $\gamma$ is chosen large enough, the resulting classification $\sgn( y_{\vect{\psi}}^*)$ corresponds also to the prior. This means, that in the reproducing kernel Hilbert space $\mathcal{N}_{K_{\vect{\psi}}}$ of the augmented kernel the prior $\psi_{\bin}$ itself can be regarded as a smooth function with a small Hilbert space norm. 

As prerequisite in Theorem \ref{thm:consistency} we assume that the entries of the kernel $K_f$ are all positive. The following proposition provides us with two important families of such positive p.d. kernels.
\begin{proposition}
Let $\Ll \in \Rr^{n \times n}$ be a generalized Laplacian on $G$ and $\mathrm{d} = \underset{i \in \{1, \ldots, n\}}{\max} \Ll_{i,i}$.
\begin{itemize}
\item[(i)] The diffusion kernel on $G$ given by the matrix exponential $e^{-t \Ll}$, $t \geq 0$, is non-negative. If $G$ is a connected graph, then all entries of $e^{-t \Ll}$, $t > 0$, are strictly positive.
\item[(ii)] If $\|\mathrm{d} \mathbf{I}_n - \Ll\| < \mathrm{d} + \epsilon$ then the variational spline kernel given by $(\epsilon \mathbf{I}_n + \mathbf{L})^{-s}$ is positive definite and non-negative for $s \in \Nn$ and $\epsilon > - \mathrm{d}$. If $G = (V,E,\Ll)$ is connected, then all entries of $(\epsilon \mathbf{I}_n + \mathbf{L})^{-s}$ are strictly positive.
\end{itemize}
\end{proposition}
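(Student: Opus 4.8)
The plan is to reduce both parts to a single structural fact about the sign pattern of a generalized Laplacian. First I would set $\mathbf{B} := \mathrm{d}\,\mathbf{I}_n - \Ll$ and observe that $\mathbf{B}$ is entrywise non-negative: its off-diagonal entries equal $-\Ll_{i,j} \geq 0$ by the defining inequalities \eqref{eq:generalizedLaplacian}, while its diagonal entries $\mathrm{d} - \Ll_{i,i} \geq 0$ by the choice $\mathrm{d} = \max_i \Ll_{i,i}$. Moreover, for $i \neq j$ one has $\mathbf{B}_{i,j} > 0$ exactly when $\node{v}_i$ and $\node{v}_j$ are connected, so the off-diagonal support of $\mathbf{B}$ reproduces the edge set of $G$; if $G$ is connected, $\mathbf{B}$ is therefore irreducible. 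These two remarks, non-negativity and, under connectivity, irreducibility, will carry the entire argument.

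For part $(i)$ I would write $-t\Ll = -t\,\mathrm{d}\,\mathbf{I}_n + t\mathbf{B}$ and, using that $\mathbf{I}_n$ commutes with $\mathbf{B}$, factor the exponential as $e^{-t\Ll} = e^{-t\mathrm{d}}\,e^{t\mathbf{B}}$. Since $t \geq 0$ and $\mathbf{B} \geq 0$ entrywise, every term of the series $e^{t\mathbf{B}} = \sum_{k\geq 0} t^k \mathbf{B}^k/k!$ is entrywise non-negative, and the scalar prefactor $e^{-t\mathrm{d}}$ is positive; hence $e^{-t\Ll}$ is non-negative. For the strict positivity claim I would fix $t > 0$ and a pair $(i,j)$: by connectedness there is a walk of some length $m \leq n-1$ from $\node{v}_i$ to $\node{v}_j$, so $(\mathbf{B}^m)_{i,j} > 0$, and this single strictly positive contribution $t^m(\mathbf{B}^m)_{i,j}/m!$ inside the otherwise non-negative series forces $(e^{t\mathbf{B}})_{i,j} > 0$.

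For part $(ii)$ I would first extract positive definiteness directly from the hypothesis. As $\mathbf{B}$ is symmetric with eigenvalues $\mathrm{d} - \lambda_k$, the bound $\|\mathrm{d}\mathbf{I}_n - \Ll\| < \mathrm{d} + \epsilon$ reads $|\mathrm{d} - \lambda_k| < \mathrm{d} + \epsilon$ for every $k$; in particular $\mathrm{d} - \lambda_k < \mathrm{d} + \epsilon$, i.e. $\epsilon + \lambda_k > 0$, so $\epsilon\mathbf{I}_n + \Ll$ is positive definite and all its powers are p.d. The same bound gives $\|\mathbf{B}/(\mathrm{d}+\epsilon)\| < 1$, so I would expand
\begin{equation*}
(\epsilon\mathbf{I}_n + \Ll)^{-1} = \bigl((\mathrm{d}+\epsilon)\mathbf{I}_n - \mathbf{B}\bigr)^{-1} = \frac{1}{\mathrm{d}+\epsilon}\sum_{k=0}^\infty \Bigl(\frac{\mathbf{B}}{\mathrm{d}+\epsilon}\Bigr)^{k}
\end{equation*}
as a convergent Neumann series with entrywise non-negative terms, which shows $(\epsilon\mathbf{I}_n+\Ll)^{-1} \geq 0$; since an integer power of a non-negative matrix is non-negative, $(\epsilon\mathbf{I}_n+\Ll)^{-s} \geq 0$ for $s \in \Nn$. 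For connected $G$ the irreducibility of $\mathbf{B}$ makes every entry of the Neumann sum strictly positive, again by the walk argument, so $(\epsilon\mathbf{I}_n+\Ll)^{-1}$ is entrywise positive, and a product of entrywise positive matrices is entrywise positive, hence so is $(\epsilon\mathbf{I}_n+\Ll)^{-s}$. Alternatively, one could obtain non-negativity of $(\epsilon\mathbf{I}_n+\Ll)^{-s}$ from the Gamma integral $\frac{1}{\Gamma(s)}\int_0^\infty t^{s-1}e^{-t\epsilon}e^{-t\Ll}\,\mathrm{d}t$ together with part $(i)$, but the Neumann route has the advantage of using the stated norm bound directly.

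I expect the only delicate point to be the passage from connectedness to strict positivity: one must argue cleanly that connectedness of $G$ is equivalent to irreducibility of $\mathbf{B}$ and that irreducibility guarantees, for each index pair, a power $\mathbf{B}^m$ with a strictly positive $(i,j)$-entry, i.e. that walks in $G$ are detected by powers of $\mathbf{B}$ regardless of the possibly vanishing diagonal of $\mathbf{B}$. Everything else is bookkeeping: matching the single norm hypothesis to both the convergence of the Neumann series and the positive definiteness of $\epsilon\mathbf{I}_n + \Ll$, and recording that non-negativity, respectively strict positivity, is preserved under taking integer matrix powers.
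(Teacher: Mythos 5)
Your proposal is correct and follows essentially the same route as the paper: the same decomposition $\Ll = \mathrm{d}\,\mathbf{I}_n - \mathbf{B}$ with $\mathbf{B}$ entrywise non-negative (and irreducible when $G$ is connected), and the same Neumann series for part $(ii)$. The only differences are that you spell out the elementary series argument for $e^{-t\Ll} = e^{-t\mathrm{d}}e^{t\mathbf{B}}$ where the paper instead cites the Berman--Plemmons result on essentially non-negative matrices (while remarking that the proof is elementary), and that you additionally verify the positive definiteness claim in $(ii)$ from the norm hypothesis, a step the paper leaves implicit.
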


\begin{proof}
$(i)$ We can write the Laplacian as $\Ll = \Dd - \Aa$ with the diagonal matrix $\mathbf{D} = \mathrm{d} \mathbf{I}_n$, and the remainder $\Aa = \mathbf{D} - \Ll$. As $\mathrm{d} = \max_i \Ll_{i,i}$ and $\Ll$ is defined as a generalized Laplacian, the entries of $\Aa$ are all non-negative. For this, the matrix $-\mathbf{L} = \Aa - \Dd $ is essentially non-negative and \cite[Chap. 6, Theorem (3.12)]{Berman1994} implies that the exponential $e^{-t \Ll}$ is non-negative for $t \geq 0$ (in fact, the proof of this statement is quite elementary). Further, the graph $G$ is connected if and only if the generalized Laplacian $\Ll$ is an irreducible matrix. Again, by \cite[Chap. 6, Theorem (3.12)]{Berman1994}, this implies that all entries of $e^{-t \Ll}$ are strictly positive for $t > 0$. 

$(ii)$ If $\|\Dd - \Ll\| < \mathrm{d} + \epsilon$, a Neumann series expansion can be applied in order the represent $(\epsilon \mathbf{I}_n + \mathbf{L})^{-s}$:
\[(\epsilon \mathbf{I}_n + \mathbf{L})^{-s} = ( (\epsilon + \mathrm{d}) \mathbf{I}_n - \mathbf{A})^{-s} = \left( \frac{1}{\epsilon + \mathrm{d}} \sum_{k=0}^\infty( \ts \frac{1}{\epsilon + \mathrm{d}} \Aa)^k \right)^s.\]
Therefore, as all entries of $\Aa$ are non-negative, $\epsilon + \mathrm{d} > 0$ and $s \in \Nn$, also all entries of $(\epsilon \mathbf{I}_n + \mathbf{L})^{-s}$ are non-negative. Further, if $G$ is connected, then $\Aa$ is irreducible, and all entries of 
$(\epsilon \mathbf{I}_n + \mathbf{L})^{-s}$ are strictly positive. 
\end{proof}

\subsection{Comparison of Error Estimates} \label{sec:errorbounds}

We want to go a step further in our analysis of the augmentation step and consider, for a given 
classification $y$ on $G$, the dependence of the RLS approximation error $|y(\node{v}) -  y_{\vect{\psi}}^*(\node{v})|$ in terms of the binary feature map $\psi_{\bin}$. For this, we split the approximation error in two parts 
\begin{equation*}
|y(\node{v}) -  y_{\vect{\psi}}^*(\node{v})| \leq 
\underbrace{|y(\node{v}) -  \itpx(\node{v})|}_{\text{Interpolation error}} + \underbrace{|\itpx(\node{v}) -  y_{\vect{\psi}}^*(\node{v})|}_{\text{Regularization error}}.
\end{equation*} 
As introduced in Section \ref{subsec:kernelRLS}, $\itpx$ denotes the uniquely determined interpolant for the data $(\node{w}_k,y(\node{w}_k))$, $k \in \{1,\ldots,N\}$, in the native space $\mathcal{N}_{W,K_{\vect{\psi}}}$ for a vanishing regularization parameter $\gamma$. In a reproducing kernel Hilbert space setup, the interpolation error can be estimated as (cf. \cite{erb2019b}) 
\[ |y(\node{v}) -  \itpx(\node{v})| \leq  P_{W,K_{\vect{\psi}}}(\node{v}) \|y\|_{K_{\vect{\psi}}},\]
with the power function given as 
\begin{equation} \label{eq:powerfunction} P_{W,K_{\vect{\psi}}}(\node{v})
= \left\| K_{\vect{\psi}}(\cdot,\node{v}) - \sum_{k=1}^N \ell_{\vect{\psi},k}(\node{v}) K_{\vect{\psi}}(\cdot,\node{w}_k) \right\|_{K_{\vect{\psi}}}.
\end{equation}
Here, the Lagrange basis functions $\ell_{\vect{\psi},k}$ are defined as the functions in $\mathcal{N}_{W,K_{\vect{\psi}}}$ that interpolate the canonical basis function $\delta_{\node{w}_k}$ at the nodes $W$. 
\begin{lemma} \label{lem-powerfunction}
The power function $P_{W,K_{\vect{\psi}}}(\node{v})$ is independent of the 
binary feature map $\psi_{\bin}$, that is $P_{W,K_{\vect{\psi}}}(\node{v}) = P_{W,K_{f}}(\node{v})$ for all possible feature maps $\psi_{\bin}: V \to V^{F_{\bin}}$. 
\end{lemma}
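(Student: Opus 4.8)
The plan is to exploit the fact that the binary feature enters the augmented kernel only through a diagonal sign conjugation. By Lemma~\ref{lem:binaugmentedkernel}~$(i)$ we have $K_{\vect{\psi}}(\node{v},\node{w}) = \psi_{\bin}(\node{v}) K_f(\node{v},\node{w}) \psi_{\bin}(\node{w})$, so introducing the diagonal matrix $\mathbf{S} = \mathrm{diag}(\psi_{\bin}(\node{v}_1), \ldots, \psi_{\bin}(\node{v}_n))$ the kernel matrices are related by $\Kk_{\vect{\psi}} = \mathbf{S} \Kk_f \mathbf{S}$. Since $\psi_{\bin}$ takes values in $\{-1,1\}$, the matrix $\mathbf{S}$ satisfies $\mathbf{S} = \mathbf{S}^\intercal = \mathbf{S}^{-1}$ and $\mathbf{S}^2 = \mathbf{I}_n$. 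The whole argument rests on pushing this conjugation through the power function and checking that every sign factor ultimately squares to $1$ and cancels.

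First I would replace the Lagrange-function definition \eqref{eq:powerfunction} by the standard closed form
\begin{equation*}
 P_{W,K}(\node{v})^2 = K(\node{v},\node{v}) - \mathbf{k}_W(\node{v})^\intercal \Kk_W^{-1} \mathbf{k}_W(\node{v}), \qquad \mathbf{k}_W(\node{v}) = (K(\node{v},\node{w}_1), \ldots, K(\node{v},\node{w}_N))^\intercal,
\end{equation*}
valid for any p.d. kernel $K$. This is obtained by expanding the squared norm in \eqref{eq:powerfunction} via the reproducing property $\langle K(\cdot,\node{v}), K(\cdot,\node{w}) \rangle_K = K(\node{v},\node{w})$ and using that the Lagrange coefficient vector equals $\Kk_W^{-1} \mathbf{k}_W(\node{v})$; the cross term and the quadratic term then collapse into the single contribution $-\mathbf{k}_W(\node{v})^\intercal \Kk_W^{-1} \mathbf{k}_W(\node{v})$.

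It then remains to substitute $K = K_{\vect{\psi}}$ and simplify. Because $\psi_{\bin}(\node{v})^2 = 1$ the diagonal value is unchanged, $K_{\vect{\psi}}(\node{v},\node{v}) = K_f(\node{v},\node{v})$. Writing $\mathbf{S}_W$ for the restriction of $\mathbf{S}$ to the node set $W$, the factorization $\Kk_{\vect{\psi}} = \mathbf{S}\Kk_f\mathbf{S}$ yields $\mathbf{k}_{\vect{\psi},W}(\node{v}) = \psi_{\bin}(\node{v}) \mathbf{S}_W \mathbf{k}_{f,W}(\node{v})$ and $\Kk_{\vect{\psi},W}^{-1} = \mathbf{S}_W \Kk_{f,W}^{-1} \mathbf{S}_W$. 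Inserting these into the quadratic form and using $\psi_{\bin}(\node{v})^2 = 1$ together with $\mathbf{S}_W^2 = \mathbf{I}_N$, all sign factors cancel and the quadratic form reduces to $\mathbf{k}_{f,W}(\node{v})^\intercal \Kk_{f,W}^{-1} \mathbf{k}_{f,W}(\node{v})$. Hence $P_{W,K_{\vect{\psi}}}(\node{v})^2 = P_{W,K_f}(\node{v})^2$, independently of $\psi_{\bin}$, which is the claim.

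The only genuinely non-routine step is the passage from \eqref{eq:powerfunction} to the closed form; everything afterwards is sign bookkeeping. If one prefers to avoid that reduction, the same conclusion follows directly from \eqref{eq:powerfunction} by noting that $x \mapsto \mathbf{S} x$ is an isometry $\mathcal{N}_{K_f} \to \mathcal{N}_{K_{\vect{\psi}}}$, since $\|x\|_{K_{\vect{\psi}}}^2 = x^\intercal \mathbf{S} \Kk_f^{-1} \mathbf{S} x = \|\mathbf{S} x\|_{K_f}^2$. This isometry maps the $K_f$-residual $K_f(\cdot,\node{v}) - \sum_k \ell_{f,k}(\node{v}) K_f(\cdot,\node{w}_k)$ onto $\psi_{\bin}(\node{v})$ times the $K_{\vect{\psi}}$-residual appearing in \eqref{eq:powerfunction}, so the two residuals have equal norm and the power functions coincide.
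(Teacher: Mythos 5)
Your proof is correct, and it takes a genuinely different route from the paper. The paper never writes the power function in closed form: it invokes the best-approximation property of the Lagrange interpolant in the native space (citing Schaback/Wendland), substitutes the candidate coefficients $\psi_{\bin}(\node{v})\psi_{\bin}(\node{w}_k)\ell_{f,k}(\node{v})$ to obtain the upper bound $P_{W,K_{\vect{\psi}}}(\node{v}) \leq P_{W,K_f}(\node{v})$, and then repeats the argument with the roles of the two kernels exchanged to get the reverse inequality. You instead reduce \eqref{eq:powerfunction} to the explicit formula $P_{W,K}(\node{v})^2 = K(\node{v},\node{v}) - \mathbf{k}_W(\node{v})^\intercal \Kk_W^{-1}\mathbf{k}_W(\node{v})$ and check directly that the conjugation $\Kk_{\vect{\psi}} = \mathbf{S}\Kk_f\mathbf{S}$ with $\mathbf{S}^2 = \mathbf{I}_n$ leaves every term invariant; all the sign bookkeeping is correct, and the step $\Kk_{\vect{\psi},W}^{-1} = \mathbf{S}_W\Kk_{f,W}^{-1}\mathbf{S}_W$ is legitimate since $\Kk_{f,W}$ is invertible by the inclusion principle. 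Your computation buys a self-contained equality in one pass, with no appeal to an external best-approximation theorem, at the cost of first deriving the closed form; the paper's two-inequality argument avoids that derivation and is the pattern that generalizes when an explicit formula is unavailable. Your closing remark — that $x \mapsto \mathbf{S}x$ is an isometry $\mathcal{N}_{K_f} \to \mathcal{N}_{K_{\vect{\psi}}}$ carrying residual to residual — is in fact the structural core of the paper's argument made explicit; to use it as a one-step proof you should note that the uniqueness of the Lagrange basis forces $\ell_{\vect{\psi},k}(\node{v}) = \psi_{\bin}(\node{v})\psi_{\bin}(\node{w}_k)\ell_{f,k}(\node{v})$, which is exactly the identification the paper sidesteps by arguing with two inequalities.
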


\begin{proof} It is well-known (see \cite[Theorem 11.1]{Scha98}, \cite[Theorem 11.5]{We05}) that $\sum_{k=1}^N \ell_{\vect{\psi},k}(\node{v}) K_{\vect{\psi}}(\cdot,\node{w}_k)$ is the best approximation of $K_{\vect{\psi}}(\cdot,\node{v})$ in the subspace $\mathcal{N}_{K_{\vect{\psi}},W}$ with respect to the native space norm in $\mathcal{N}_{K_{\vect{\psi}}}$. Therefore, we get an upper estimate of $P_{W,K_{\vect{\psi}}}(\node{v})$ by substituting the coefficients $\ell_{\vect{\psi},k}(\node{v})$ in \eqref{eq:powerfunction} by the coefficients $\psi_{\bin}(\node{v}) 
\psi_{\bin}(\node{w}_k)\ell_{f,k}(\node{v})$. Here, $\ell_{f,k}$ denotes the 
Lagrange basis function with respect to the node $\node{w}_k$ in the original native space $\mathcal{N}_{K_{f},W}$. Using this substitution and the identity in Lemma \ref{lem:binaugmentedkernel} $(i)$, we get
\begin{align*} P_{W,K_{\vect{\psi}}}(\node{v})
&\leq \left\| K_{\vect{\psi}}(\cdot,\node{v}) - \sum_{k=1}^N \psi_{\bin}(\node{v}) 
\psi_{\bin}(\node{w}_k)\ell_{f,k}(\node{v}) K_{\vect{\psi}}(\cdot,\node{w}_k) \right\|_{K_{\vect{\psi}}} \\ &= \left\| \psi_{\bin} \left( K_{f}(\cdot,\node{v}) - \sum_{k=1}^N \ell_{f,k}(\node{v}) K_{f}(\cdot,\node{w}_k)\right) \psi_{\bin}(\node{v}) \right\|_{K_{\vect{\psi}}} = P_{W,K_{f}}(\node{v}).
\end{align*}
With an identical argumentation, we also obtain the reverse inequality $P_{W,K_{f}}(\node{v}) \leq P_{W,K_{\vect{\psi}}}(\node{v})$, and, thus, the statement of the lemma. 
\end{proof}

We next turn our attention to the regularization error.

\begin{lemma} \label{lem:regularizationerror} The regularization error is bounded by
\[ |\itpx(\node{v}) -  y_{\vect{\psi}}^*(\node{v})| \leq  \frac{\gamma N\sqrt{K_f(\node{v},\node{v})}}{\lambda_{\min}(\Kk_{f,W})+\gamma N } \|y\|_{K_{\vect{\psi}}},\]
where $\lambda_{\min}(\Kk_{f,W})$ denotes the smallest eigenvalue of $\Kk_{f,W}$.
\end{lemma}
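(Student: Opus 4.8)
The plan is to bound the pointwise regularization error by a native-space norm via the reproducing property, and then to control that norm by a spectral estimate on the coefficient difference between the interpolant $\itpx$ and the \psiGBFRLS{} solution $y_{\vect{\psi}}^*$.

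First I would note that both $\itpx$ and $y_{\vect{\psi}}^*$ lie in the finite-dimensional subspace $\mathcal{N}_{K_{\vect{\psi}},W}$ spanned by $\{K_{\vect{\psi}}(\cdot,\node{w}_1),\ldots,K_{\vect{\psi}}(\cdot,\node{w}_N)\}$, hence so does their difference. By the reproducing property, $(\itpx - y_{\vect{\psi}}^*)(\node{v}) = \langle \itpx - y_{\vect{\psi}}^*, K_{\vect{\psi}}(\cdot,\node{v})\rangle_{K_{\vect{\psi}}}$, so Cauchy--Schwarz gives
\[|\itpx(\node{v}) - y_{\vect{\psi}}^*(\node{v})| \le \|\itpx - y_{\vect{\psi}}^*\|_{K_{\vect{\psi}}} \, \|K_{\vect{\psi}}(\cdot,\node{v})\|_{K_{\vect{\psi}}}.\]
Since $\|K_{\vect{\psi}}(\cdot,\node{v})\|_{K_{\vect{\psi}}}^2 = K_{\vect{\psi}}(\node{v},\node{v})$, and Lemma~\ref{lem:binaugmentedkernel}~$(i)$ together with $\psi_{\bin}(\node{v})^2 = 1$ yields $K_{\vect{\psi}}(\node{v},\node{v}) = K_f(\node{v},\node{v})$, the second factor equals $\sqrt{K_f(\node{v},\node{v})}$. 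This accounts for the factor $\sqrt{K_f(\node{v},\node{v})}$ in the claimed bound.

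It then remains to show that $\|\itpx - y_{\vect{\psi}}^*\|_{K_{\vect{\psi}}} \le \frac{\gamma N}{\lambda_{\min}(\Kk_{f,W}) + \gamma N}\,\|y\|_{K_{\vect{\psi}}}$. Writing $A := \Kk_{\vect{\psi},W}$ and $y_W := (y(\node{w}_1),\ldots,y(\node{w}_N))^{\intercal}$, the interpolant has coefficient vector $c^{\circ} = A^{-1} y_W$ and the RLS solution has coefficient vector $c^{*} = (A + \gamma N \mathbf{I}_N)^{-1} y_W$, so the resolvent identity gives $c^{\circ} - c^{*} = \gamma N\, A^{-1}(A + \gamma N \mathbf{I}_N)^{-1} y_W$. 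Using the representer form and the identity $\bigl\|\sum_k a_k K_{\vect{\psi}}(\cdot,\node{w}_k)\bigr\|_{K_{\vect{\psi}}}^2 = a^{\intercal} A\, a$, I would express
\[\|\itpx - y_{\vect{\psi}}^*\|_{K_{\vect{\psi}}}^2 = (\gamma N)^2\, y_W^{\intercal} A^{-1}(A + \gamma N \mathbf{I}_N)^{-2} y_W.\]
Diagonalizing $A = \sum_j \mu_j e_j e_j^{\intercal}$ (all factors commute, being functions of $A$) and bounding each scalar factor $(\mu_j + \gamma N)^{-2}$ by $(\lambda_{\min}(A) + \gamma N)^{-2}$ yields
\[\|\itpx - y_{\vect{\psi}}^*\|_{K_{\vect{\psi}}}^2 \le \frac{(\gamma N)^2}{(\lambda_{\min}(A) + \gamma N)^2}\, y_W^{\intercal} A^{-1} y_W.\]

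To finish, I would observe that $y_W^{\intercal} A^{-1} y_W = \|\itpx\|_{K_{\vect{\psi}}}^2$ and that $\itpx$ is the minimum-norm interpolant of the data on $W$, so $\|\itpx\|_{K_{\vect{\psi}}} \le \|y\|_{K_{\vect{\psi}}}$ because $y$ itself interpolates its own values at $W$. Finally, Lemma~\ref{lem:binaugmentedkernel}~$(iii)$ gives $\lambda_{\min}(\Kk_{\vect{\psi},W}) = \lambda_{\min}(\Kk_{f,W})$, and chaining the displayed estimates with the pointwise bound from the second paragraph produces the claimed inequality. I expect the main obstacle to be the diagonalization step: verifying that the factorization of $A^{-1}(A + \gamma N \mathbf{I}_N)^{-2}$ lets the minimum eigenvalue enter exactly as written, while the remaining ingredients (reproducing property, resolvent identity, minimum-norm property, and the eigenvalue identity of Lemma~\ref{lem:binaugmentedkernel}~$(iii)$) are routine.
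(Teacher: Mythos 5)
Your proof is correct and follows essentially the same route as the paper's: both arguments rest on the resolvent identity $\Kk_{\vect{\psi},W}^{-1}-(\Kk_{\vect{\psi},W}+\gamma N\mathbf{I}_N)^{-1}=\gamma N\,\Kk_{\vect{\psi},W}^{-1}(\Kk_{\vect{\psi},W}+\gamma N\mathbf{I}_N)^{-1}$, a Cauchy--Schwarz step producing the factor $\sqrt{K_f(\node{v},\node{v})}$, the minimum-norm bound $\|\itpx\|_{K_{\vect{\psi}}}\leq\|y\|_{K_{\vect{\psi}}}$, and Lemma~\ref{lem:binaugmentedkernel}~$(iii)$ to replace $\lambda_{\min}(\Kk_{\vect{\psi},W})$ by $\lambda_{\min}(\Kk_{f,W})$. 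The only organizational difference is that you apply Cauchy--Schwarz in the native space via the reproducing property and then bound $\|\itpx-y_{\vect{\psi}}^*\|_{K_{\vect{\psi}}}$ by diagonalizing $\Kk_{\vect{\psi},W}$, whereas the paper applies it directly to the coefficient vectors in $\Rr^N$; both routes yield the stated estimate.
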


\begin{proof}
This proof consists mostly in the combination of standard estimates and identities for reproducing kernel Hilbert spaces. Compared to similar estimates as, for instance, given in \cite{Saitoh2005}, we have to consider also the dependence on the feature map $\psi_{\bin}$. 
Based on the representation \eqref{eq:psirepresentertheorem} for the RLS solution $y_{\vect{\psi}}^*$ and the interpolant $\itpx$ (with vanishing $\gamma = 0$), we first obtain the identity
\begin{align*}|\itpx(\node{v}) -  y_{\vect{\psi}}^*(\node{v})| &= \left| 
\left(K_{\vect{\psi}}(\node{v},\node{w}_1), \ldots, K_{\vect{\psi}}(\node{v},\node{w}_N)\right)\left( \Kk_{\vect{\psi},W}^{-1} - (\Kk_{\vect{\psi},W} + \gamma N \mathbf{I}_n)^{-1}\right) (
y(\node{w}_1), \ldots, y(\node{w}_N))^\intercal\right| \\
&= \gamma N \left| 
\left(K_{\vect{\psi}}(\node{v},\node{w}_1), \ldots, K_{\vect{\psi}}(\node{v},\node{w}_N)\right) \Kk_{\vect{\psi},W}^{-1}(\Kk_{\vect{\psi},W} + \gamma N \mathbf{I}_n)^{-1} (
y(\node{w}_1), \ldots, y(\node{w}_N))^\intercal \right|.
\end{align*}
Using the Cauchy-Schwarz inequality, we therefore get 
\[|\itpx(\node{v}) -  y_{\vect{\psi}}^*(\node{v})| \leq \frac{\gamma N \overbrace{\| \Kk_{\vect{\psi},W}^{-1/2}\left(K_{\vect{\psi}}(\node{v},\node{w}_1), \ldots, K_{\vect{\psi}}(\node{v},\node{w}_N)^{\intercal}\right)\|}^{\sqrt{K_f(\node{v},\node{v})}} \overbrace{\| \Kk_{\vect{\psi},W}^{-1/2}(
y(\node{w}_1), \ldots, y(\node{w}_N))^\intercal\|}^{\|\itpx\|_{K_{\vect{\psi}}}} } {\|(\Kk_{\vect{\psi},W} + \gamma N \mathbf{I}_n)^{-1}\|^{-1}}.
 \]
As the interpolant $\itpx$ satisfies $\|\itpx\|_{K_{\vect{\psi}}} \leq \|y\| _{K_{\vect{\psi}}}$ (cf. \cite[Corollary 10.25]{We05}), and $\lambda_{\min}(\Kk_{f,W})+\gamma N \leq \|(\Kk_{\vect{\psi},W} + \gamma N \mathbf{I}_n)^{-1}\|^{-1}$ (by Lemma \ref{lem:binaugmentedkernel}), we obtain the desired estimate. 
\end{proof}

The combination of the two Lemmas \ref{lem-powerfunction} and \ref{lem:regularizationerror} gives the following overall error bound. 

\begin{theorem} \label{thm-errorestimate}
In the setting of Assumption \ref{ass:1}, we get the error bound
\[ |y(\node{v}) - y_{\vect{\psi}}^* (\node{v})| \leq \left(P_{W,K_{f}}(\node{v}) +  \gamma N \frac{\sqrt{K_f(\node{v},\node{v})}}{\lambda_{\min}(\Kk_{f,W})+\gamma N} \right) \|y\|_{K_{\vect{\psi}}}.\]
\end{theorem}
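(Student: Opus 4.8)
The plan is to synthesize the two preceding lemmas via the triangle-inequality decomposition that was already introduced before Lemma \ref{lem-powerfunction}. First I would recall the splitting
\[ |y(\node{v}) - y_{\vect{\psi}}^*(\node{v})| \leq |y(\node{v}) - \itpx(\node{v})| + |\itpx(\node{v}) - y_{\vect{\psi}}^*(\node{v})|, \]
which separates the total error into an interpolation error and a regularization error, both measured with respect to the data $(\node{w}_k, y(\node{w}_k))$ and the native space $\mathcal{N}_{K_{\vect{\psi}}}$.

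For the interpolation error, I would apply the reproducing-kernel Hilbert space power-function estimate stated just above Lemma \ref{lem-powerfunction}, namely $|y(\node{v}) - \itpx(\node{v})| \leq P_{W,K_{\vect{\psi}}}(\node{v}) \|y\|_{K_{\vect{\psi}}}$, and then invoke Lemma \ref{lem-powerfunction} to replace $P_{W,K_{\vect{\psi}}}(\node{v})$ by $P_{W,K_f}(\node{v})$. This is precisely the place where the feature-map independence of the power function is used, since it allows the interpolation contribution to be expressed purely in terms of the original kernel $K_f$, yielding the first summand $P_{W,K_f}(\node{v})$ of the claimed bound.

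For the regularization error, I would directly quote the estimate established in Lemma \ref{lem:regularizationerror}, namely $|\itpx(\node{v}) - y_{\vect{\psi}}^*(\node{v})| \leq \frac{\gamma N \sqrt{K_f(\node{v},\node{v})}}{\lambda_{\min}(\Kk_{f,W}) + \gamma N} \|y\|_{K_{\vect{\psi}}}$, which supplies the second summand. Adding the two contributions and factoring out the shared native-space norm $\|y\|_{K_{\vect{\psi}}}$ then produces exactly the stated inequality.

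I do not expect any genuine obstacle here: the theorem is a direct assembly of the two lemmas, and the only point requiring minor care is that both error terms are expressed in the same norm $\|\cdot\|_{K_{\vect{\psi}}}$, so that the common factor can be pulled out of the sum. This matching of norms is automatic, since Lemmas \ref{lem-powerfunction} and \ref{lem:regularizationerror} already deliver their bounds in terms of $\|y\|_{K_{\vect{\psi}}}$; no further reconciliation of the two native spaces is needed.
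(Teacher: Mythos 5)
Your proposal is correct and follows exactly the route the paper takes: the triangle-inequality split into interpolation and regularization error stated before Lemma \ref{lem-powerfunction}, the power-function bound combined with Lemma \ref{lem-powerfunction} to replace $P_{W,K_{\vect{\psi}}}$ by $P_{W,K_f}$, and Lemma \ref{lem:regularizationerror} for the second summand. The paper gives no further argument beyond this assembly, so nothing is missing.
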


\begin{remark} On the right hand side of the error estimate in Theorem \ref{thm-errorestimate}, the term in brackets is independent of the binary feature $\psi_{\bin}$ and the feature information is only contained in $\|y\|_{K_{\vect{\psi}}}$. This indicates that the approximation error $|y(\node{v}) - y_{\vect{\psi}}^* (\node{v})|$ will be small if the given data $y$ is smooth with respect to the native space norm $\|y\|_{K_{\vect{\psi}}}$ imposed by the feature $\psi_{\bin}$. According to this a feature choice $\psi_{\bin}$ is advantageous if it keeps the native space norm $\|y\|_{K_{\vect{\psi}}}$ small.

In \cite{erb2019b} it is shown that the power function $P_{W,K_{f}}(\node{v})$ in the error estimate is bounded by
\[ P_{W,K_{f}}(\node{v}) \leq (1+\normconst) 
\left( \sum_{k=M+1}^n \hat{f}_k \right)^{1/2}. \]
In particular, $P_{W,K_{f}}$ is small if the norming constant $\normconst$ is small and the GBF $f$ decays rapidly in the Fourier domain. The norming constant $\normconst$ itself describes how well the subset $W$ is able to resolve the space $\mathcal{B}_M = \mathrm{span} \{u_1, \ldots, u_M\}$ of bandlimited functions. In particular, $\normconst < \infty$ holds true if and only if the linear mapping $x \to (x(\node{w}_1), \ldots, x(\node{w}_N))^{\intercal}$ can be inverted on $\mathcal{B}_M$, i.e., if $W$ is a uniqueness (or norming) set for the space $\mathcal{B}_M$, see \cite{Pesenson2008,Pesenson2009}. This notion of uniqueness set is also related to the existence of uncertainty principles on graphs, see \cite{erb2019,erb2019b}. 
\end{remark}  

\section{Examples of Feature-Augmented Classifications} \label{sec:numericalexamples}

In order to obtain a better intuition about the classification behavior of augmented and non-augmented kernels, we test the derived kernel-based classification methods on four data sets. The experimental settings are summarized in Table \ref{tab:experiments}.

\begin{table}[htbp]
 \caption{Experimental setups}
 \label{tab:experiments}
    \centering

   \begin{tabular}{l | cccc}
Name of data set & Two-moon & $\text{\O}$ & WBC & Ionosphere  \\[2mm] \hline  
Type of data set & synthetic & synthetic & \begin{minipage}{3cm} \centering real \tiny (UCI machine \\ learning repository) \end{minipage}& \begin{minipage}{3cm} \centering real \tiny (UCI machine \\ learning repository) \end{minipage}\\[1mm]
\# Nodes & $600$ & $600$ & $683$ & $351$ \\
\# Edges & $10750$ & $8103$ & $232903$ & $61425$ \\[1mm]
Laplacian $\Ll$& \begin{minipage}{3cm} \centering normalized \\ \tiny (standard $\mathbf{A}$) \end{minipage} & \begin{minipage}{3cm} \centering normalized \\ \tiny (standard $\mathbf{A}$) \end{minipage} & \begin{minipage}{3cm} \centering normalized \\ \tiny (weighted $\mathbf{A}$) \end{minipage} & \begin{minipage}{3cm} \centering normalized \\ \tiny (weighted $\mathbf{A}$) \end{minipage}\\[1mm]
GBF & $f_{e^{-50 \mathbf{L}}}$ & $f_{e^{-5 \mathbf{L}}}$ & $f_{e^{-10 \mathbf{L}}}$ & $f_{e^{-5 \mathbf{L}}}$ \\[1mm]
Regularization $\gamma$ & $10^{-4}$ & $10^{-4}$ & $10^{-3}$ & $10^{-3}$ \\[1mm]
1. Feature map $\psi_1$ \quad & \begin{minipage}{3cm} \centering binary \\ \tiny (spectral clustering)\end{minipage} & \begin{minipage}{3cm} \centering similarity \\ \tiny (geometric prior)\end{minipage} & \begin{minipage}{3cm} \centering binary \\ \tiny (spectral clustering)\end{minipage} & \begin{minipage}{3cm} \centering binary \\ \tiny (spectral clustering)\end{minipage} \\
2. Feature map $\psi_2$ \quad & \begin{minipage}{3cm} \centering binary \\ \tiny (given prior)\end{minipage} & \begin{minipage}{3cm} \centering similarity \\ \tiny (geometric prior)\end{minipage} & - & - \\[1mm]
1. Feature kernel & $\Kk^{F_{\bin}}$, $\alpha = -1$  & $\Kk^{F_{\SIM}}$, $\alpha = 10$  & $\Kk^{F_{\bin}}$, $\alpha = -0.5$ & $\Kk^{F_{\bin}}$, $\alpha = -0.5$ \\
2. Feature kernel & $\Kk^{F_{\bin}}$, $\alpha = 0.1$  & $\Kk^{F_{\SIM}}$, $\alpha = 10$  & - & - \\ \hline
\end{tabular}

\end{table}

\subsection{Synthetic two-moon data set}

\begin{figure}[htbp]
	\centering
	\includegraphics[width= 1\textwidth]{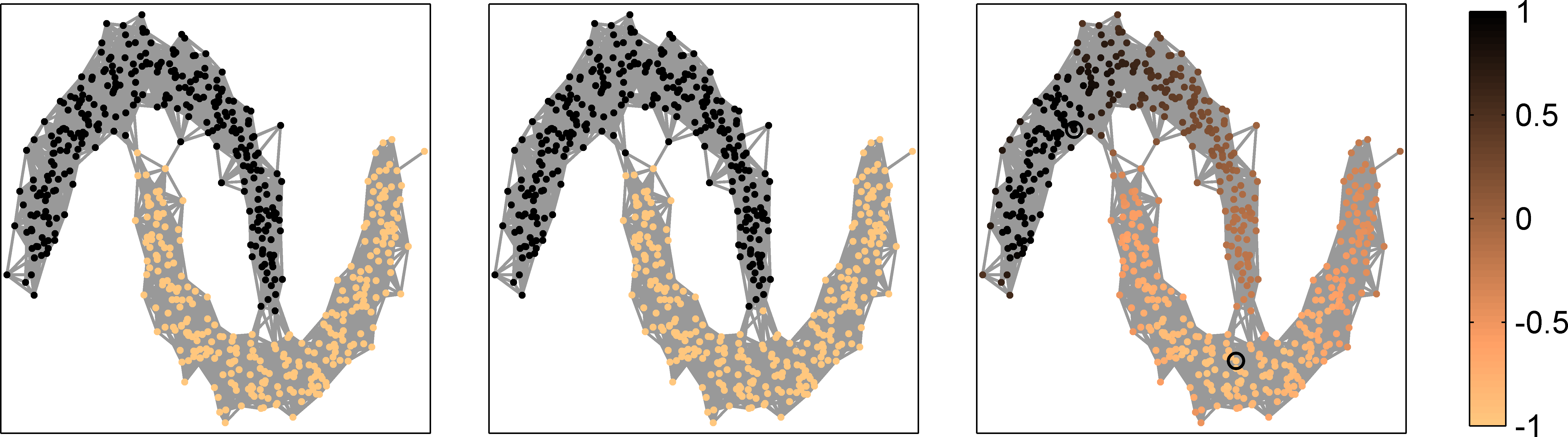}
	\caption{Binary classification on the two-moon data set. Left: the graph with the given labeling. Middle: classification with spectral clustering. Right: GBF-RLS solution $y^*$ for two labeled nodes.}
	\label{fig:twomoon1}
\end{figure}

\begin{figure}[htbp]
	\centering
	\includegraphics[width= 1\textwidth]{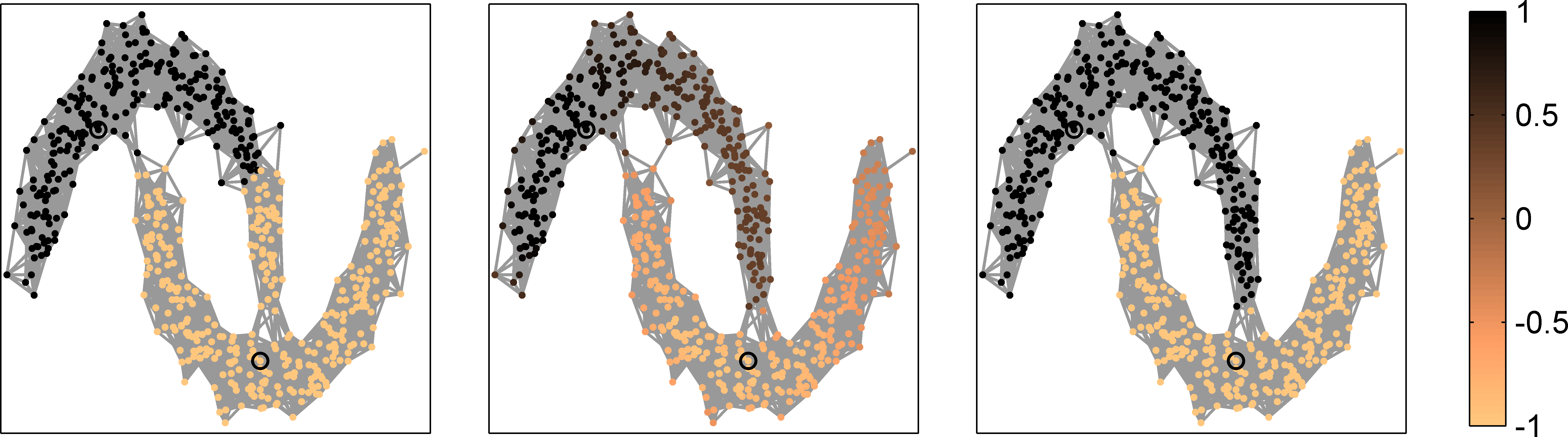}
	\caption{Binary classification on the two-moon data set. Left: GBF-RLS classification corresponding to the solution $y^*$ given in Fig. \ref{fig:twomoon1} (Right). Middle: \psiGBFRLS{} solution $y^*_{\vect{\psi}}$ for the two given labels. Right: \psiGBFRLS{} classification corresponding to the solution $y^*_{\vect{\psi}}$ given in Fig. \ref{fig:twomoon2} (Middle).}
	\label{fig:twomoon2}
\end{figure}

Our first test graph $G_{\twomoon}$ is a synthetic data set referred to as two-moon data set. It is displayed in Fig. \ref{fig:twomoon1} (left). It consists of $n = 600$ vertices in two separated point clouds having the form of half-circles. Each of the two half-circles contains $300$ nodes. Two nodes are connected with an edge, if the euclidean distance between the nodes is smaller than half the radius of the half-circles. As a graph Laplacian for $G_{\twomoon}$ we use the normalized graph Laplacian $\Ll_N$ as constructed in Section \ref{sec:spectralgraphtheory} (3) upon the standard adjacency matrix $\mathbf{A}$.   

To generate the kernel for the supervised GBF-RLS classifier, we use the diffusion GBF $f_{e^{-50 \mathbf{L}}}$ described in Section \ref{sec:examples} (3). As a first additional feature for the \psiGBFRLS{} classifier, we use a binary classification $\psi_{1}$ based on the unsupervised output of a binary spectral clustering algorithm (Shi-Malik normalized cut \cite{shimalik2000}) on the graph $G_{\twomoon}$. The result of this spectral clustering is displayed in Fig. \ref{fig:twomoon1} (middle).

\vspace{1mm}

\noindent{\bfseries Results and Discussion.} 
The results of the supervised GBF-RLS and the semi-supervised \psiGBFRLS{} classifier for $2$ given labels (one in each of the two half-circles) are displayed in Fig. \ref{fig:twomoon1} and \ref{fig:twomoon2}. Due to the separated structure of the two-moon data set, the featured spectral classifier $\psi_{1}$ performs already very well (only two miss-classifications). As only two labeled nodes are given, the GBF-RLS classifier
(Fig. \ref{fig:twomoon2} (left)) is not (yet) able to capture the global structure of the data. On the other hand, the feature-augmented \psiGBFRLS{} classifier 
(Fig. \ref{fig:twomoon2} (right)) contains the information of the auxiliary classificator $\psi_{1}$ and is therefore able to outperform the 
GBF-RLS classifier. In fact, the \psiGBFRLS{} classifier and the spectral classifier
$\psi_{1}$ are identical in this example.

In a second test, we want to see what happens if we use more labeled nodes. The mean accuracies of the classifiers (on $100$ randomly performed experiments) for an increasing number of labels are listed in Table \ref{tab:twomoon}. It is visible that the supervised GBF-RLS classifier improves with a larger number of labels and is able to determine the global classification of the data. In particular, if a larger number of labels is available it is not necessary to use the SSL classifier instead of the supervised one.  
  
\begin{table}[htbp]
 \caption{Mean classification accuracy for two-moon data}
 \label{tab:twomoon}
    \centering

   \begin{tabular}{r*{6}{G}}
  \begin{minipage}{3cm} \centering \# Labeled nodes \end{minipage} & 
  \rz{2} & 
  \rz{4} & 
  \rz{8} &
  \rz{16} & 
  \rz{32} & 
  \rz{64} \smallskip \\ \hline
\begin{minipage}{3cm} \centering Spectral clustering \\ \scriptsize (unsupervised) \end{minipage}
           & 0.9967 & 0.9967 & 0.9967 & 0.9967 & 0.9967 & 0.9967 \\
\begin{minipage}{3cm} \centering GBF-RLS \\ \scriptsize  (supervised) \end{minipage} 
           & 0.7110 & 0.9350 & 0.9860 & 0.9993 & 0.9999 & 1.0000 \\
\begin{minipage}{3cm} \centering \psiGBFRLS{} \\ \scriptsize  (semi-supervised) \end{minipage}
           & 0.9905 & 0.9958 & 0.9959 & 0.9967 & 0.9967 & 0.9967 \\
\end{tabular}

\end{table}

\noindent{\bfseries Refined classification with a second prior.}
As a final experiment on this data set, we consider a second prior $\psi_2$ that separates the vertices of the graph along a vertical axis through the center of the point cloud. We want to see whether the SSL classifier with this additional feature is able to separate the data set as well. Based on four labeled nodes we see in Fig. \ref{fig:twomoon3} that this is in fact the case. Here, the supervised GBF-RLS classifier based on $4$ labels provides already a good separation of the two half-circles while it only has problems to distinguish between left and right hand side data. 

\begin{figure}[htbp]
	\centering
	\includegraphics[width= 1\textwidth]{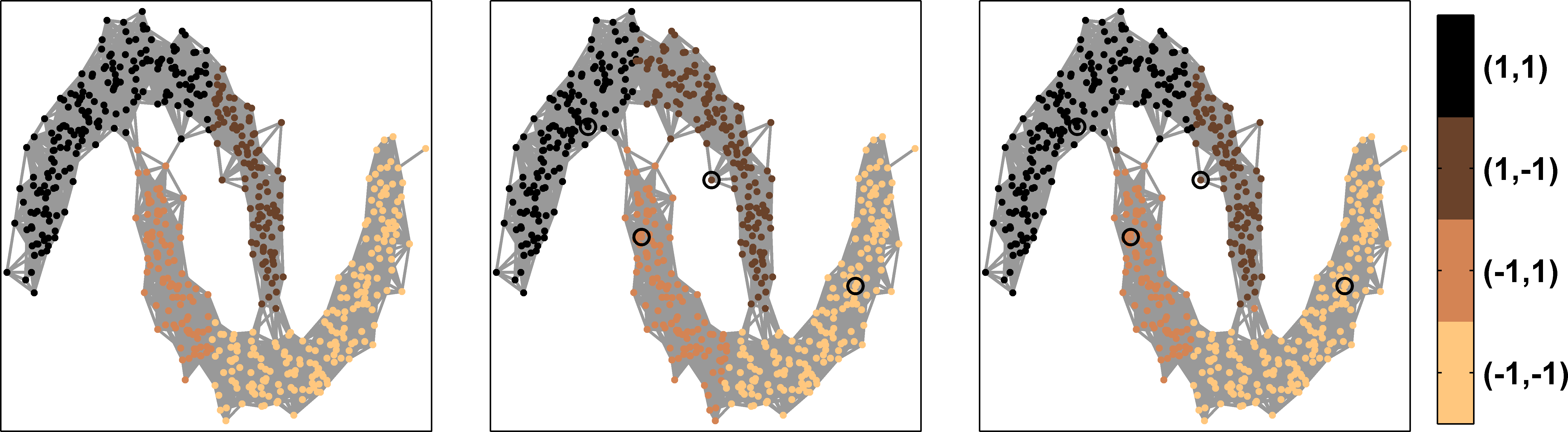}
	\caption{SSL classification of the two moon data set. Left: the two-moon graph with labels in $4$ classes. Middle: supervised classification based on GBF-RLS with $4$ given  labels. Right: \psiGBFRLS{} classification augmented with the two feature maps $\psi_1$ and $\psi_2$.}
	\label{fig:twomoon3}
\end{figure}

\subsection{Synthetic $\text{\O}$ graph}

\begin{figure}[htbp]
	\centering
	\includegraphics[width= 1\textwidth]{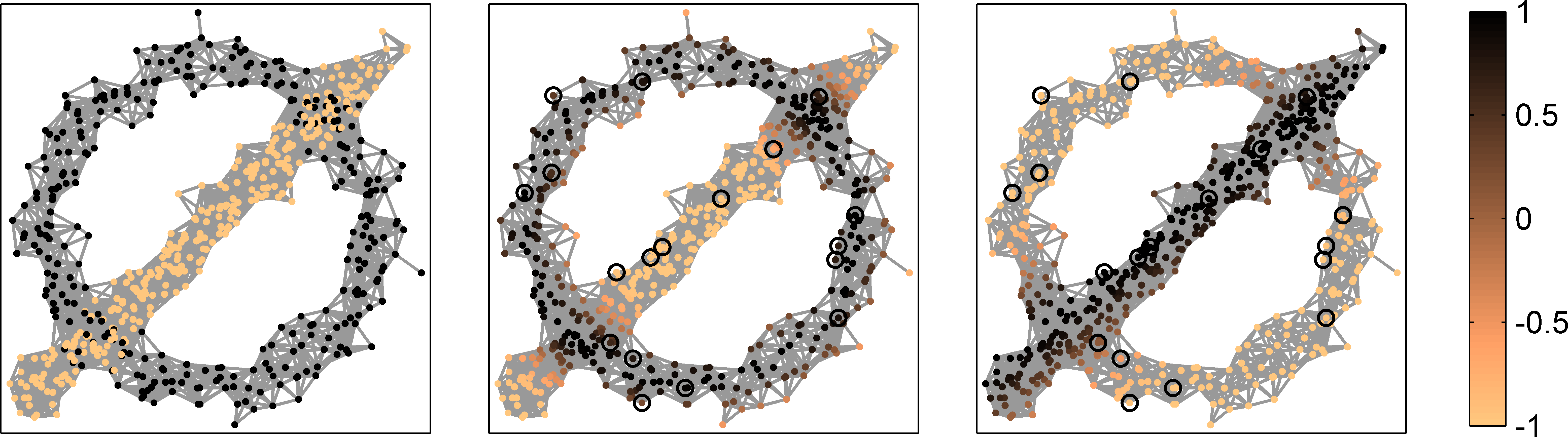}
	\caption{$\text{\O}$ data set. Left: the data set with the given classification of the nodes. Middle: first feature information $\psi_1$. Right: second feature information $\psi_2$.}
	\label{fig:slashedo1}
\end{figure}

Also the second test graph $G_{\text{\O}}$, displayed in Fig. \ref{fig:slashedo1} (left), is a synthetic data set. It consists of $n = 600$ vertices distributed in two point clouds with $300$ nodes: one point cloud in the form of a circle, the other a straight line segment slashing the circle. Two nodes in $G_{\text{\O}}$ are linked if the distance is smaller than $1/5$ of the circle diameter. As for the two-moon graph, we use the normalized graph Laplacian to determine the graph Fourier transform.  

\begin{figure}[htbp]
	\centering
	\includegraphics[width= 1\textwidth]{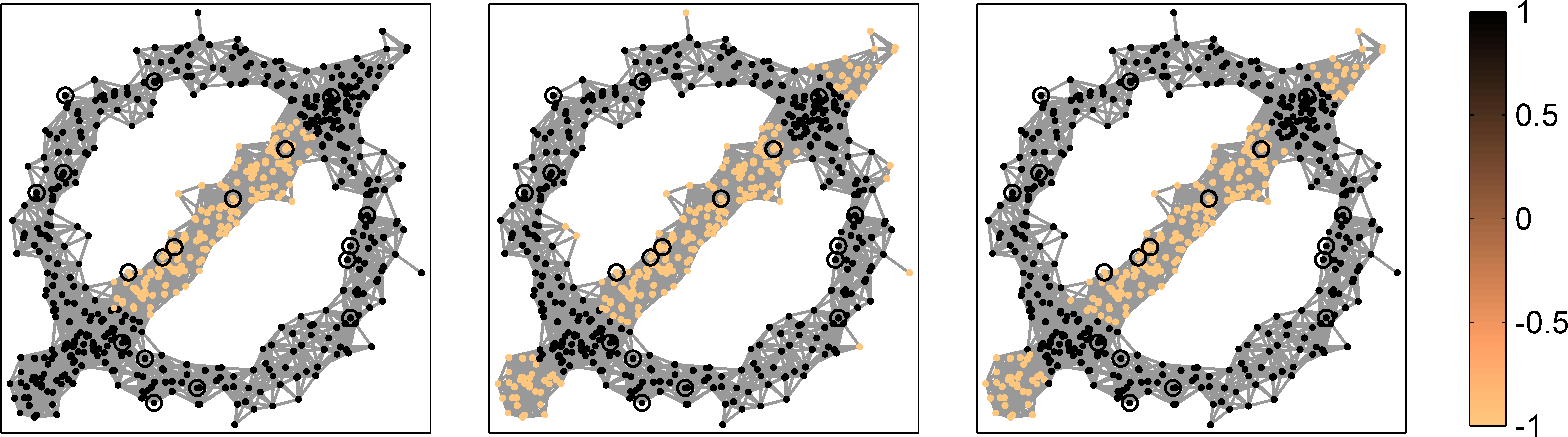}
	\caption{SSL classification of the $\text{\O}$ data set. Left: supervised classification based on GBF-RLS with $18$ given  labels. Middle: \psiGBFRLS{} classification augmented with one additional feature information $\psi_1$. Right: \psiGBFRLS{} classification augmented with the two feature maps $\psi_1$ and $\psi_2$.}
	\label{fig:slashedo2}
\end{figure}

To generate a kernel on $G_{\text{\O}}$ we use the diffusion GBF $f_{e^{-5 \mathbf{L}}}$ described in Section \ref{sec:examples} (3). 
As priors for the semi-supervised \psiGBFRLS{} classifier we use geometric information of the data set. To generate the prior $\psi_{1}$ displayed in Fig. \ref{fig:slashedo1} (middle) we use a distance measure based on the euclidean distances of the single nodes to a reference circle. In a similar way, for the second prior $\psi_{2}$ shown in Fig. \ref{fig:slashedo1} (right) we used the distances of the nodes to a reference line segment.

\vspace{1mm}

\noindent{\bfseries Results and Discussion.} 
In Fig. \ref{fig:slashedo2} the resulting classifications of the supervised (left) and the semi-supervised learning scheme with one (middle) and two (right) additional features are displayed. In all three cases, $18$ labeled nodes were used. Adding the geometric priors, clearly improves the overall performance of the classification and makes the geometric structure of the data more visible. In this case, it makes also sense to augment the kernel with the second prior, as a few misclassifications can be removed. Both priors have however no impact on the classification accuracy at the intersection regions of the circle with the line. In these regions, it is rather random whether a node belongs to the circle or to the line segment. 
Table \ref{tab:slashed-o} confirms these observations and shows that with a small number of labels the two semi-supervised schemes perform better, while with a large number of nodes it is not necessary to augment the GBF with additional priors.

\begin{table}[htbp]
\caption{Mean classification accuracy for $\text{\O}$ data set}
\label{tab:slashed-o}
\centering
\begin{tabular}{r*{6}{G}}
  \begin{minipage}{4cm} \centering \# Labeled nodes \end{minipage} \quad & 
  \rz{2} & 
  \rz{4} & 
  \rz{8} &
  \rz{16} & 
  \rz{32} & 
  \rz{64} \smallskip \\ \hline
\begin{minipage}{4cm} \centering GBF-RLS \\ \scriptsize (supervised) \end{minipage} & 0.5624   & 0.6347 & 0.7511 & 0.8322 & 0.8843 & 0.9152 \\
\begin{minipage}{4cm} \centering \psiGBFRLS{} (1 feature) \\ \scriptsize (semi-supervised) \end{minipage} &0.6164 & 0.6934 & 0.7861 & 0.8260 & 0.8666 & 0.8936 \\
\begin{minipage}{4cm} \centering \psiGBFRLS{} (2 features) \\ \scriptsize (semi-supervised) \end{minipage} &0.6870 & 0.8054 & 0.8723 & 0.8844 & 0.9001 & 0.9101 \\
\end{tabular}

\end{table}

\subsection{Empirical analysis on real data sets}
As a final evaluation, we test our \psiGBFRLS{} schemes on two real world data sets from 
the UCI Machine learning repository. The two data sets are the Ionosphere data set ($351$ nodes) and the (original) Wisconsin Breast Cancer (WBC) data set ($699$ nodes). From these two point clouds, we generate similarity graphs as described in Section \ref{sec:similarity}. As graph Laplacian we use the normalized graph Laplacian based upon the weighted adjacency matrix of the similarity graphs. 
Using the Shi-Malik spectral cut \cite{shimalik2000} as unsupervised classifier, we generate an additional binary feature graph based on the description in Section \ref{sec:examplebinaryclassification}.  

\begin{table}[t]
 \caption{Mean classification accuracy for Wisconsin Breast Cancer data set}
 \label{tab:WBC}
    \centering

   \begin{tabular}{r*{6}{G}}
  \begin{minipage}{3cm} \centering \# Labeled nodes \end{minipage} \quad & 
  \rz{2} & 
  \rz{4} & 
  \rz{8} &
  \rz{16} & 
  \rz{32} & 
  \rz{64} \smallskip \\ \hline
\begin{minipage}{3cm} \centering Spectral clustering \\ \scriptsize (unsupervised) \end{minipage} & 0.9605 & 0.9605 & 0.9605 & 0.9605 & 0.9605 & 0.9605 \\
\begin{minipage}{3cm} \centering GBF-RLS \\ \scriptsize  (supervised) \end{minipage}                       & 0.7273 & 0.8914 & 0.9576 & 0.9635 & 0.9640 & 0.9643 \\
\begin{minipage}{3cm} \centering \psiGBFRLS{} \\ \scriptsize  (semi-supervised) \end{minipage} & 0.9168 & 0.9395 & 0.9554 & 0.9605 & 0.9605 & 0.9605 \\
\end{tabular}
\end{table}

\begin{table}[t]
 \caption{Mean classification accuracy for Ionosphere data set}
 \label{tab:Ionosphere}
    \centering

   \begin{tabular}{r*{6}{G}}
  \begin{minipage}{3cm} \centering \# Labeled nodes \end{minipage} \quad & 
  \rz{10} & 
  \rz{20} & 
  \rz{30} &
  \rz{40} & 
  \rz{50} & 
  \rz{60} \smallskip \\ \hline
\begin{minipage}{3cm} \centering Spectral clustering \\ \scriptsize (unsupervised) \end{minipage}  & 0.7179 & 0.7179 & 0.7179 & 0.7179 & 0.7179 & 0.7179 \\
\begin{minipage}{3cm} \centering GBF-RLS \\ \scriptsize  (supervised) \end{minipage}             & 0.7445 & 0.8049 & 0.8309 & 0.8479 & 0.8607 & 0.8718 \\
\begin{minipage}{3cm} \centering \psiGBFRLS{} \\ \scriptsize  (semi-supervised) \end{minipage} & 0.8025 & 0.8423 & 0.8684 & 0.8813 & 0.8921 & 0.9005
\end{tabular} 

\end{table}

\noindent{\bfseries Results and Discussion.} The mean accuracies (for randomly picked labeled nodes in $100$ tests) for the supervised and 
semi-supervised classifiers are displayed in Table \ref{tab:WBC} and \ref{tab:Ionosphere}. For the Wisconsin Breast Cancer data set the classification results are similar as for the two-moon data set. The unsupervised classifier based on spectral clustering performs already well on the data set and for a small set of labeled nodes the SSL classifier is able to outperform the supervised one. On the other hand, for a large number of labeled nodes it is not necessary to use the feature-augmented kernel.

Interestingly, for the Ionosphere data the behavior of the classifiers looks differently. While the unsupervised spectral classifier is only partially accurate, it apparently contains relevant global information of the data set. In this way, the classification quality of the feature-augmented kernel is solidly superior to the non-augmented kernel also for a larger number of labeled nodes. 


\section*{Acknowledgment}
This work was partially supported by GNCS-In$\delta$AM and by the European Union's Horizon 2020 research and innovation programme ERA-PLANET, grant agreement no. 689443.

\end{document}